\newcommand\myeq{\stackrel{\mathclap{\normalfont{D}}}{=}}
\newcommand*\circled[1]{\tikz[baseline=(char.base)]{		\node[shape=circle,fill=black,text=white,draw,inner sep=.1pt] (char) {#1};}}
\newcommand*{\dif}{\mathop{}\!\mathrm{d}}
\providecommand{\tabularnewline}{\\}
\newcommand{\BlackBox}{\rule{1.5ex}{1.5ex}}  
    \renewenvironment{proof}{\par\noindent{\bf Proof\ }}{\hfill\BlackBox\\[2mm]}
    \newenvironment{proof}{\par\noindent{\bf Proof\ }}{\hfill\BlackBox\\[2mm]}
\newtheorem{theorem}{Theorem}
\newtheorem{lemma}[theorem]{Lemma} 
\newtheorem{proposition}[theorem]{Proposition}
\newtheorem{definition}[theorem]{Definition}
\title{OVD-Explorer: \\Optimism Should Not Be the Sole Pursuit of Exploration in Noisy Environments}
\author{
    Jinyi Liu\textsuperscript{\rm 1}\equalcontrib, Zhi Wang\textsuperscript{\rm 2}\equalcontrib, Yan Zheng\textsuperscript{\rm 1}\thanks{Corresponding author: Yan Zheng (yanzheng@tju.edu.cn)}, Jianye Hao\textsuperscript{\rm 1}, Chenjia Bai\textsuperscript{\rm 3}, Junjie Ye\textsuperscript{\rm 2}, \\Zhen Wang\textsuperscript{\rm 4}, Haiyin Piao\textsuperscript{\rm 4}, Yang Sun\textsuperscript{\rm 5} \\
}
\begin{document}

\maketitle

\begin{abstract}
In reinforcement learning, the optimism in the face of uncertainty (OFU) is a mainstream principle for directing exploration towards less explored areas, characterized by higher uncertainty. However, in the presence of environmental stochasticity (noise), purely optimistic exploration may lead to excessive probing of high-noise areas, consequently impeding exploration efficiency. Hence, in exploring noisy environments, while optimism-driven exploration serves as a foundation, prudent attention to alleviating unnecessary over-exploration in high-noise areas becomes beneficial. In this work, we propose Optimistic Value Distribution Explorer (OVD-Explorer) to achieve a noise-aware optimistic exploration for continuous control. OVD-Explorer proposes a new measurement of the policy's exploration ability considering noise in optimistic perspectives, and leverages gradient ascent to drive exploration. Practically, OVD-Explorer can be easily integrated with continuous control RL algorithms.  Extensive evaluations on the MuJoCo and GridChaos tasks demonstrate the superiority of OVD-Explorer in achieving noise-aware optimistic exploration.
\end{abstract}
\section{Introduction}
Efficient exploration is crucial for improving the reinforcement learning (RL) efficiency and ultimate policy performance \citep{sutton2018reinforcement}, and many exploration strategies have been proposed in the literatures~\citep{LillicrapHPHETS15,DBLP:conf/nips/OsbandBPR16,chen2017ucb,DBLP:conf/nips/CiosekVLH19}. Most of them follows the \textit{Optimism in the Face of Uncertainty}~(OFU) principle \citep{AuerCF02} to guide exploration optimistically towards the area with high uncertainty \citep{chen2017ucb, DBLP:conf/nips/CiosekVLH19}. 
Conceptually, OFU-based methods regard the uncertainty as the ambiguity caused by insufficient exploration, and is high at those state-action pairs seldom visited, referred to as \textit{epistemic uncertainty} \citep{DBLP:conf/nips/OsbandBPR16}.

Another kind of uncertainty existed in RL is known as \textit{aleatoric uncertainty}, caused by the randomness in the environment or policy, and referred to as \textit{noise}~\citep{KirschnerK18} or risk~\citep{DabneyOSM18}. The noise is ubiquitous in real world. 
For example, unpredictable wind shifts the trajectory after an robot's action, and rough ground changes the force point of objects, etc. 
However, overly visiting such noisy areas may cause
severely unstable state transitions (the \textit{Optimistic} arrow in the intuitive example in  Fig.~\ref{fig:intuitive_example}), thus is detrimental to the learning efficiency \citep{abs-1905-09638}. 
For this, risk-averse policy is proposed to avoid visiting the areas with high aleatoric uncertainty estimation \citep{DabneyRBM18QRDQN, DabneyOSM18}.
Typical approaches use Conditional Variance at Risk (CVaR) to calculate a conservative value estimation and guide policy learning for easing the negative effect of the noise \citep{DabneyOSM18}.
However, indiscriminately avoiding noise may also yield no performance guarantee due to excessively conservation (the \textit{risk-averse} arrow in Fig.~\ref{fig:intuitive_example}).

\label{sec:intro}
\begin{figure}[t!]
    \centering
    \includegraphics[width=0.37\textwidth]{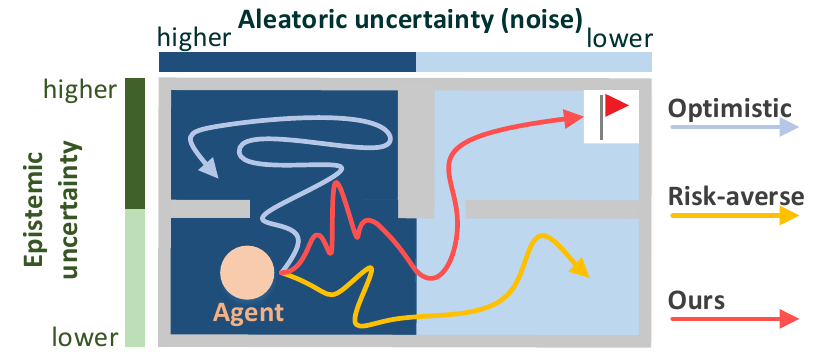}
    \caption{An intuitive example. The agent learns to move to the flag in a room filled with noise, and the noise in the left side is higher. The \textit{optimistic} exploration strategy may overly explore the noisy area, and the \textit{risk-averse} policy indiscriminately avoids noisy areas but explores insufficiently. 
    }
    \label{fig:intuitive_example}
\end{figure}

Therefore, a more reasonable approach is integrating the risk-averse policy and optimistic exploration, guiding the agent to optimistically exploring the whole areas, while avoiding overly exploring the areas with high noise, 
like the \textit{Ours} arrow in Fig.~\ref{fig:intuitive_example}. Note that overly exploring noisy areas may damage performance, but moderate exploration of such area is necessary, which should be ensured by the ability of optimistic exploration. 
The similar concern has been demonstrated to be effective under discrete control tasks \citep{DBLP:conf/iclr/NikolovKBK19, abs-1905-09638}.
However, for continuous control tasks, such concern has not yet been investigated well. 
A natural way to apply discrete control algorithms to solve continuous control problems is to discretize the continuous action space, but it suffers from the scalability issue due to the exponentially increasing discretized actions \citep{AntosMS07,hyar}, and it may throw away crucial information in action space causing its performance to be compromised 
\citep{LillicrapHPHETS15,Tang020Discretizing}. 
Thus, designing such an optimistic exploration strategy that can avoid overly exploring noisy area for continuous action space is required.

In this work, we propose OVD-Explorer, a noise-aware optimistic exploration strategy that applies to continuous control tasks for the first time. 
Specifically, we propose a new policy's exploration ability measurement, quantifying both the ability of avoiding noise and pursuing optimisticity during exploration. 
To capture the noise, the value distribution is modelled. Further, to guide optimistic exploration, the upper bound distribution of return is approximated using Optimistic Value Distribution (OVD), representing the best returns that the policy can reach. Then we quantitatively measure such ability using OVD, and generates the behavior policy by maximizing the exploration ability measurement, thus names our approach as Optimistic Value Distribution Explorer (OVD-Explorer).

To make OVD-Explorer tractable for continuous control, we generate the behavior policy using a gradient-based approach, and propose a scheme to incorporate it with policy-based RL algorithms. Practically, we demonstrate the exploration benefits based on SAC~\cite{HaarnojaZAL18SAC}, a well-performed continuous RL algorithm.
Evaluations on various continuous RL tasks, including the GridChaos, MuJoCo tasks and their stochastic version, are conducted. The results demonstrate the effectiveness of OVD-Explorer in achieving an optimistic exploration that avoids overly exploring noisy areas, leading to a better performance.

\section{Related Work}
\label{app:related}
In this work, we consider the exploration strategy under the OFU principle~\citep{AuerCF02}, 
and aim to a noise-aware optimistic exploration strategy. 

\textit{Overview of exploration approaches.} Basic exploration strategies always lead to undirected exploration through random perturbations~\citep{LillicrapHPHETS15, sutton2018reinforcement, HaarnojaZAL18SAC}. 
With the increasing emphasis on exploration efficiency in RL, various exploration methods have been developed~\citep{hao2023exploration}. One kind of methods uses intrinsic motivation to stimulate agent to explore~\citep{MartinSEH17, BellemareSOSSM16, SavinovRVMPLG19, houthooft2016vime, Badia20NGU, YuanHNMZHLCF23}. 
Some other methods, originating from tracking uncertainty, guide exploration under the OFU principle~\citep{thompson1933likelihood, DBLP:conf/nips/OsbandBPR16, DBLP:conf/iclr/NikolovKBK19, DBLP:conf/nips/CiosekVLH19, PathakG019,bai-OB2I,bai-DB}. 
The key of OFU-based exploration methods is modeling the epistemic uncertainty~\citep{DBLP:conf/nips/OsbandBPR16, GalG16dropout,bai-UCB}. Specifically, we use ensemble~\citep{DBLP:conf/nips/OsbandBPR16} for estimating epistemic uncertainty. 

\textit{The issue of overly exploring noisy areas.} There is another uncertainty in RL system, i.e., aleatoric uncertainty~(a.k.a. noise~\citep{KirschnerK18} or risk~\citep{DabneyOSM18}), captured by return distribution \citep{BellemareDM17, DabneyOSM18, DabneyRBM18QRDQN}. Overly exploring the areas with high noise could make learning unstable and inefficient, thus many works seek a conservative and noise-averse (or risk-averse) policy to make the policy stable \citep{DabneyOSM18, dsac,bai-MQN}. Nevertheless, conservative alone without advanced exploration could induce low exploration efficiency, and exploration without avoiding noise could make interaction risky. Thus some recent works produce optimistic exploration strategies considering risk~\citep{DBLP:conf/icml/MavrinYKWY19, DBLP:conf/iclr/NikolovKBK19}. 
However, such methods are complicated when deriving a behavior policy and only limited to discrete control. 

Indeed, addressing noise in exploration poses a challenge for well-performing continuous RL algorithms~\citep{HaarnojaZAL18SAC, dsac}. While exploration strategies like OAC \citep{DBLP:conf/nips/CiosekVLH19} are designed following OFU principle, guided by the upper bound of $Q$ estimation, they overlook the potential impact of noise. This oversight can lead to misguided exploration, hampering the learning process. To address that, we propose OVD-Explorer to guide agent to explore optimistically, while avoiding overly exploring the noisy areas, improving the robustness of exploration especially facing heteroscedastic noise.

\section{Preliminaries}
\label{sec:pre}

\subsection{Distributional Value Estimation}
To capture the environment noise, we use quantile regression \citep{DabneyRBM18QRDQN} to formulate $Q$-value distribution. 
$Q$-value distribution, represented by the quantile random variable $Z$, maps the state-action pair to a uniform probability distribution supported on the return values at all corresponding quantile fractions. 
Given state-action pair $(s, a)$, we denote the $i$-th quantile fraction as $\tau_i$, and the value at $\tau_i$ as $Z_{\tau_i}(s, a)$, where $\tau_i \in [0, 1]$.

Based on the Bellman operator \citep{WatkinsD92}, the distributional Bellman operator \citep{BellemareDM17} $\mathcal{T}_D^\pi$ under policy $\pi$ is given as:
\begin{equation}
\begin{aligned}
\label{eq:dbo}
\mathcal{T}_D^\pi Z(s, a) \myeq R(s, a) + \gamma Z(s', a'), a' \sim \pi(\cdot | s').
\end{aligned}
\end{equation}
Notice that $\mathcal{T}_D^\pi$ operates on random variables, $\myeq$ denotes that distributions on both sides have equal probability laws. Based on operator $\mathcal{T}_D^\pi$, QR-DQN~\citep{DabneyRBM18QRDQN} trains quantile estimations via the quantile regression loss \citep{koenker2001quantile}, which is denoted as:
\begin{equation}
\label{3-2-e2}
\mathcal{L}_{QR}(\theta) = \frac{1}{N} \sum_{i=1}^{N}\sum_{j=1}^{N}[\rho_{\hat{\tau}_i}(\delta_{i, j})],
\end{equation}
where $\theta$ and $\bar{\theta}$ is the parameters of the value distribution estimator and its target network, respectively, TD error $\delta_{i, j} = R(s, a) + \gamma {Z}_{\hat{\tau}_i}(s', a'; \bar{\theta}) - {Z}_{\hat{\tau}_j}(s, a; \theta)$, the quantile Huber loss $\rho_{\tau}(u) = u * |\tau - \mathds{1}_{u < 0}|$, and $\hat{\tau}_i$ means the quantile midpoints, which is defined as $\hat{\tau}_i = \frac{\tau_{i-1} + \tau_{i}}{2} $.
\subsection{Distributional Soft Actor-Critic}
\label{sec3.3}
Distributional Soft Actor-Critic (DSAC) \citep{dsac} seamlessly integrates distributional RL with Soft Actor-Critic (SAC) \citep{HaarnojaZAL18SAC}.
Basically, based on the Eq.~\ref{eq:dbo},
the distributional soft Bellman operator $\mathcal{T}^\pi_{DS}$ is defined considering the maximum entropy RL as follows:
\begin{equation}
\begin{aligned}
\mathcal{T}^\pi_{DS} Z(s, a)\ \myeq\  R(s, a) + \gamma [Z(s', a') - \alpha \log \pi(a' | s')],
\end{aligned}
\end{equation}
where $a' \sim \pi(\cdot | s'), s' \sim \mathcal{P}(\cdot|s, a)$.
Then, to overcome overestimation of $Q$-value estimation, DSAC extends clipped double $Q$-Learning \citep{FujimotoHM18}, maintaining two critic estimators $\theta_k, k=1, 2$. 
Thus, the quantile regression loss differs from Eq.~\ref{3-2-e2} on TD loss of $\theta_l$:
\begin{equation}
\begin{aligned}
\label{3-3-e5}
 \delta_{i, j}^l = &R(s, a) + \gamma [\min_{k=1, 2} {Z}_{\hat{\tau}_i}(s', a'; \bar{\theta}_k) - \alpha \log \pi(a' | s'; \bar{\phi})] \\&- {Z}_{\hat{\tau}_j}(s, a; \theta_l),
\end{aligned}
\end{equation}
where $\bar{\theta} $ and $\bar{\phi}$ represents their target networks respectively. The objective of actor is the same as SAC,
\begin{equation}
\begin{aligned}
\label{3-3-e7}
    \mathcal{J}_\pi(\phi) = \mathop{\mathbb{E}}\limits_{\substack{s\sim \mathcal{D}\\ \epsilon \sim \mathcal{N}}}[\ \log \pi(f_\phi(s, \epsilon) | s) - Q(s, f_\phi(s, \epsilon); \theta)\ ],
\end{aligned}
\end{equation}
where $\mathcal{D}$ is the replay buffer, $f_\phi(s, \epsilon)$ means sampling action with re-parameterized policy and $\epsilon$ is a noise vector sampled from any fixed distribution, like standard spherical Gaussian. Here, $Q$ value is the minimum value of the expectation on totally $N$ quantile fractions, as
\begin{equation}
\begin{aligned}
    Q(s, a; \theta) = \min_{k=1, 2} {\mathbb{E}}_{i \sim \mathcal{U}(1, N)}{Z}_{\hat{\tau}_i}(s, a; \theta_k).
\end{aligned}
\end{equation}

\section{Optimistic Value Distribution Explorer}
\label{sec:OVD-Explorer}

In noisy environments, a more efficient exploration strategy entails being noise-aware optimistic, especially to avoid excessive exploration in noisy areas. 
Over exploration towards the areas with high noise may damage the exploration performance, but indiscriminately avoiding visiting such areas could also compromise performance due to excessively conservation and insufficient exploration.
In this work, we propose OVD-Explorer to achieve a noise-aware optimistic exploration in continuous RL. Accordingly, the key insight, theoretical derivation and formulation, and analysis of OVD-Explorer are outlined below. 

\subsection{Noise-aware Optimistic Exploration}
\label{sec:3-1}

Several previous optimistic exploration strategies for continuous control typically estimate the upper bound of $Q$-value, and guide exploration by maximizing this upper bound \citep{DBLP:conf/nips/CiosekVLH19, LeeLSA21}.
While such upper bounds provide valuable guidance for optimistic exploration, they fail to capture the noise in the environment. To address that, we propose to incorporate the value distribution into the definition of the upper bound to capture noise, and define the upper bound distribution of $Q$-value. Additionally, we introduce a novel exploration ability measurement for policy distribution $\pi(\cdot)$ using such upper bound distribution, to characterize a policy's ability for noise-aware optimistic exploration. We then derive the behavior (exploration) policy by maximizing this ability measurement.

Firstly, we define the upper bound distribution of the $Q$-value at each state-action pair as $\Bar{Z}^\pi(s,a)$.
\begin{definition}[The upper bound distribution of $Q$-value]
Given state-action pair $(s, a)$, the upper bound distribution of its $Q$-value, denoted as $\Bar{Z}^\pi(s,a)$, is a value distribution satisfying that at each quantile fraction $\tau_i \in [0, 1]$, its value $\Bar{Z}^\pi_{\tau_i}(s, a)$ is the upper bound of possible estimations:
\begin{equation}
\Bar{Z}^\pi_{\tau_i}(s, a) := \sup_\theta {Z}^\pi_{\tau_i}(s, a; \theta),
\end{equation}
where $\theta$ represents different estimators of value distribution, ${Z}^\pi_{\tau_i}(s, a; \theta)$ represent the value at quantile fraction $\tau_i$ of the value distribution estimation ${Z}^\pi(s, a; \theta)$.
\end{definition}

We expect an effective exploration policy to approach the upper bound of $Q$-value. With the distribution-based definition of such an upper bound, we then employ mutual information to evaluate the correlation between the policy distribution and the upper bound distribution, which forms the basis for our definition of exploration capability. Overall, given current state $s$, we quantitatively measure the policy's exploration ability, denoted as ${\bf{F}}^\pi(s)$, by the integral of mutual-information between policy $\pi(\cdot | s)$ and the upper bound distributions of $Q$-value over the action space:
\begin{equation}
\label{eq:def_mi}
    {\bf{F}}^\pi(s)
    = \int_{a'} {\bf{MI}}(\Bar{Z}^\pi(s,a'); \pi(\cdot|s)|s) \dif{a'}
\end{equation}
where $a' \in \bf{A}$ denotes any legal action.
Now we state how to approximate the exploration ability in Proposition~\ref{the}.
\begin{proposition}
\label{the}
The mutual information in Eq.~\ref{eq:def_mi} at state $s$ can be approximated as:
\begin{equation}
\begin{aligned}
\scriptsize
\label{eq4-1-5}
{\bf{F}}^\pi(s) \approx
\frac{1}{C} \mathop{\mathbb{E}}_{\substack{a\sim\pi(\cdot | s)\\ \bar{z}(s, a)\sim \bar{Z}^\pi(s, a)}}\left[\Phi_{Z^\pi}(\bar{z}(s, a)) \log \frac{\Phi_{Z^\pi}(\bar{z}(s, a))}{C}  \right].
\end{aligned}
\end{equation}
$\Phi_{x}(\cdot)$ is the cumulative distribution function (CDF) of random variable $x$, $\bar{z}(s,a)$ is the sampled upper bound of return from its distribution $\Bar{Z}^\pi(s,a)$ following policy $\pi$, $Z^{\pi}$ describes the current return distribution of the policy $\pi$, and $C$ is a constant~(see proof in App.~\ref{proof:themi}).
\end{proposition}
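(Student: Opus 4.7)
The plan is to unfold the mutual information in Eq.~\ref{eq:def_mi} using its standard KL-divergence representation, then approximate the density of the upper-bound distribution $\bar{Z}^\pi(s,a)$ in terms of the CDF $\Phi_{Z^\pi}(\cdot)$ of the current return distribution, and finally collapse the resulting iterated integrals into the single Monte-Carlo expectation stated in Eq.~\ref{eq4-1-5}. Throughout, I will absorb any action-independent normalizers into the free constant $C$.

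First, I would interpret $\mathbf{MI}(\bar{Z}^\pi(s,a'); \pi(\cdot|s)\mid s)$ via the joint distribution on $(A,\bar{Z})$ where $A\sim\pi(\cdot|s)$ and $\bar{Z}\mid A=a \sim \bar{Z}^\pi(s,a)$, so that the outer integral over $a'$ in the definition of $\mathbf{F}^\pi(s)$ folds into the action-marginal integration of the mutual information. This rewrites $\mathbf{F}^\pi(s)$ as a double integral
$$\iint \pi(a\mid s)\,p_{\bar{Z}\mid A}(\bar{z}\mid a)\,\log\frac{p_{\bar{Z}\mid A}(\bar{z}\mid a)}{p_{\bar{Z}}(\bar{z})}\,d\bar{z}\,da,$$
reducing the core task to substituting interpretable expressions for the two densities and then repackaging the double integral as an expectation.

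Second, and this is the main modeling step, I would approximate the conditional upper-bound density by $p_{\bar{Z}\mid A}(\bar{z}\mid a)\approx \Phi_{Z^\pi(s,a)}(\bar{z})/C$. The intuition is that an upper bound taken across a family of value-distribution estimators behaves like an extreme-value quantity whose mass near level $\bar{z}$ scales with the current probability of lying below $\bar{z}$, i.e.\ with $\Phi_{Z^\pi}(\bar{z})$. Plugging this into the conditional and the marginal $p_{\bar{Z}}$, action-independent factors cancel or get absorbed into $C$, the log-ratio collapses to $\log(\Phi_{Z^\pi}(\bar{z})/C)$, and the outer weight contributes the additional factor $\Phi_{Z^\pi}(\bar{z})$. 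Rewriting the double integral as $\mathbb{E}_{a\sim\pi(\cdot|s),\,\bar{z}\sim\bar{Z}^\pi(s,a)}[\,\cdot\,]$ then recovers exactly the right-hand side of Eq.~\ref{eq4-1-5}.

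The hard part, I expect, will be the density-to-CDF substitution: justifying $p_{\bar{Z}\mid A}(\bar{z}\mid a)\propto \Phi_{Z^\pi}(\bar{z})$ requires a careful extreme-value-style argument linking the upper-bound interpretation of $\bar{Z}^\pi$ to the empirically tracked return CDF of $Z^\pi$, and the constant $C$ must be identified so that it genuinely does not depend on $a$ (otherwise it cannot be pulled out of the logarithm). Once that correspondence is in place, converting between integrals, expectations, and the log-ratio is routine bookkeeping, and the $\approx$ in the statement is precisely where that one modeling approximation sits.
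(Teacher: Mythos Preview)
Your outline diverges from the paper at the one place that matters: \emph{which} conditional gets approximated by the CDF. The paper does not model the likelihood $p_{\bar Z\mid A}(\bar z\mid a)$ at all. Instead, after an independence reduction (their Lemma~1), the expression is written entirely in terms of the \emph{posterior over actions} $p(a\mid \bar z(s,a),s)$, namely
\[
\mathbf{F}^\pi(s)=\int_a \mathbb{E}_{\bar z\sim \bar Z^\pi(s,a)}\Big[p(a\mid \bar z,s)\,\log\frac{p(a\mid \bar z,s)}{\pi(a\mid s)}\Big]\,da,
\]
and it is this posterior that is approximated by $p(a\mid \bar z,s)\approx \tfrac{1}{C}\,\pi(a\mid s)\,\Phi_{Z^\pi}(\bar z)$, using the truncation prior $z^\pi(s,a)\le \bar z(s,a)$. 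With this substitution the $\pi(a\mid s)$ in numerator and denominator of the log cancel exactly, the surviving $\pi(a\mid s)$ in the weight turns $\int da$ into $\mathbb{E}_{a\sim\pi}$, and the second occurrence of the posterior supplies the extra $\Phi_{Z^\pi}$ factor outside the log. That is where Eq.~\eqref{eq4-1-5} comes from.

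Your route---approximating $p_{\bar Z\mid A}(\bar z\mid a)\approx \Phi_{Z^\pi}(\bar z)/C$---does not close. First, $\Phi_{Z^\pi}(\cdot)/C$ is not an integrable density in $\bar z$, so it cannot serve as a conditional law of $\bar Z$. Second, even granting it heuristically, the marginal $p_{\bar Z}(\bar z)=\int \pi(a'\mid s)\,\Phi_{Z^\pi(s,a')}(\bar z)\,da'/C$ is a genuine function of $\bar z$; it cannot be ``absorbed into $C$,'' so the log-ratio does not collapse to $\log(\Phi_{Z^\pi}(\bar z)/C)$. Third, you cannot simultaneously substitute $\Phi/C$ for $p(\bar z\mid a)$ in the weight to produce the extra $\Phi$ factor \emph{and} still read the resulting double integral as $\mathbb{E}_{a\sim\pi,\,\bar z\sim \bar Z^\pi}[\,\cdot\,]$: those are two different measures on $\bar z$. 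Finally, your first step---folding the outer $a'$ integral into the action marginal of a single joint $(A,\bar Z)$---is not automatic; the paper needs the explicit independence observation $p(a\mid \bar z(s,a'),s)=\pi(a\mid s)$ for $a\neq a'$ to carry that reduction out. Rewriting the argument around the action posterior, rather than the $\bar z$-likelihood, fixes all of these issues at once.
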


Note that, to optimize the above objective, we need to formulate two critical components at any state-action pair $(s, a)$ under policy $\pi$: \circled{1} the return distribution $Z^\pi(s,a)$ and \circled{2} the upper bound distribution of return $\bar{Z}^\pi(s,a)$. We detail the formulations in Sec.~\ref{sec4.2}.2. 

Proposition~\ref{the} reveals that ${\bf{F^{\pi}}}(s)$ is only proportional to the CDF value {$\Phi_{Z^\pi}(\bar{z}(s, a))$}, which is also proportional to $\bar{z}(s, a)$, an upper bound of $Q$-value, thus a higher ${\bf{F^{\pi}}}(s)$ represents the higher ability of optimistic exploration, following traditional OFU principle. 
Meanwhile, $\Phi_{Z^\pi}(\bar{z}(s, a))$ increases as the variance of current return distribution $Z^\pi$ becomes lower, thus the higher ${\bf{F^{\pi}}}(s)$ means the higher ability of exploring towards the areas with low variance of return distribution, i.e., low noise. A more detailed analysis is given in Sec.~\ref{sec4.4}.3.

Given current state $s$, OVD-Explorer aims to find the behavior policy $\pi_E$ which has the best exploration ability ${\bf{F^{\pi}}}(s)$ in the policy space $\Pi$, as follows:
\begin{equation}
\label{eq4-0-1}
   \pi_E = \arg\max_{\pi \in \Pi} {\bf{F}}^\pi(s).
\end{equation}
For continuous action space, generating the analytical solution $\pi_E$ in Eq.~\ref{eq4-0-1} is intractable. Hence, we propose to perform the gradient ascent based on the policy $\pi$, so as to iteratively deriving a behavior policy with high ability of noise-aware optimistic exploration.
In short, given the policy $\pi_\phi$ parameterized by $\phi$, we calculate the derivative $\nabla_\phi \Phi_{Z^{\pi_\phi}}(\bar{z}(s, a))$ and guide $\phi$ along the gradient direction to improve the exploration ability (more details in Sec.~\ref{4-3}).

\subsection{Distributions of Return's Upper Bound and Return}
\label{sec4.2}

Now, we introduce the formulation of the return distribution $Z^\pi(s,a)$ and its upper bound distribution $\bar{Z}^\pi(s,a)$.

In specific, we use two value distribution estimators $\hat{Z}(s,a; \theta_1)$ and $\hat{Z}(s,a; \theta_2)$ parameterized by $\theta_1$ and $\theta_2$, as ensembles to formulate $\bar{Z}^\pi$ and $Z^\pi$ differently. Unless stated otherwise, $(s,a)$ is omitted hereafter to ease notation. As mentioned earlier, two types of uncertainties are involved, epistemic uncertainty and aleatoric uncertainty (noise), denoted as $\sigma_{\text{epistemic}}^2(s, a)$ and $\sigma_{\text{aleatoric}}^2(s, a)$, respectively. Due to space limitations, the computational details regarding uncertainty value are presented in detail in the appendix.

\textit{Formulation of $\bar{Z}^\pi$.}
The $\bar{Z}^\pi$ denotes the upper bound distribution of return that policy $\pi$ can reach. We propose Gaussian distribution with optimistic mean value $\mu_{\bar{Z}}(s,a)$ for formulation to formulate $\bar{Z}^\pi(s, a)$ as follows, and accordingly refer to it as \textit{Optimistic Value Distribution} (OVD):
\begin{equation}
\label{4-2-z*}
    \bar{Z}^\pi(s,a) \sim \mathcal{N} (\mu_{\bar{Z}}(s,a), \sigma^2_{\text{aleatoric}}(s,a)),
\end{equation}
where $\sigma^2_{\text{aleatoric}}(s,a)$ is its variance.
Notable, \citet{chen2017ucb} discovers the optimisticity is beneficial for better estimating the upper bound, which motivates us to optimistically estimate $\mu_{\bar{Z}}(s,a)$ as averaged upper bound value of return by considering epistemic uncertainty as follows:
\begin{equation}
\begin{aligned}
\label{4-2-e13}
&\mu_{\bar{Z}}(s,a) = \mu(s,a) + \beta \sigma_{\text{epistemic}}(s, a),\ \\& \textup{s.t. } \mu(s,a) = \mathbb{E}_{i \sim \mathcal{U}(1, N)}\mathbb{E}_{k={1, 2}} \hat{Z}_{\tau_i}(s, a; \theta_k)
\end{aligned}
\end{equation}
where $\mu(s,a)$ represents the expected $Q$-value estimation, and uncertainty value is weighted by $\beta$, $\mathcal{U}$ is uniform distribution, $N$ is the number of quantiles, and $\hat{Z}_{\tau_i}(s,a;\theta_k)$ is the value of the $i$-th quantile drawn from $\hat{Z}(s,a;\theta_k)$.
 
Leveraging optimistic value estimations together with explicitly modeling the noise, the upper bound distribution $\bar{Z}^\pi$ can be comprehensively formulated, known as OVD. Such an optimistic distribution can guides effectively optimistic exploration for OVD-Explorer.

\textit{Formulation of $Z^{\pi}$.}
$Z^\pi$ estimates the return distribution obtained following policy $\pi$. Following \citet{FujimotoHM18}, to alleviate overestimation, we formulate $Z^\pi$ in a pessimistic way.
In practice, $Z^\pi$ can be measured in two ways. First, similar to formulating $\bar{Z}^\pi$ in Eq.~\ref{4-2-z*}, $Z^\pi$ can also be formulated as Gaussian distribution as follows:
\begin{equation}
\begin{aligned}
\label{4-2-e15}
    &Z^{\pi}(s,a) \sim \mathcal{N}(\mu_{Z^\pi}(s,a), \sigma^2_{\text{aleatoric}}(s, a)), \quad \\&\textit{s.t.}\quad \mu_{Z^\pi}(s,a) = \mu(s, a) - \beta \sigma_{\text{epistemic}}(s, a),
\end{aligned}
\end{equation}
where $\mu(s, a)$, $\sigma_{\text{aleatoric}}(s, a)$ and $\sigma_{\text{epistemic}}(s, a)$ are the same defined in Eq.~\ref{4-2-e13}. Differently, $\sigma_{\text{epistemic}}(s, a)$ is subtracted from $\mu(s,a)$ to reveal the pessimistic estimation.

Another way is to formulate $Z^\pi$ pessimistically as multivariate uniform distribution as:
\begin{equation}
\label{4-2-e17}
\begin{aligned}
&Z^{\pi}(s,a) \sim \mathcal{U}\{z^{\pi}_i(s,a;\theta)\}_{i=1, ..., N}, \quad \\&\textit{s.t.}\ z^{\pi}_i(s,a;\theta)=\min_{k = {1, 2}} \hat{Z}_{\tau_i}(s, a; \theta_k),
\end{aligned}
\end{equation}
where each quantile value $z^{\pi}_i(s,a;\theta) $ is the minimum estimated value among ensemble estimators (i.e., $\hat{Z}_{\tau_i}(s,a;\theta_k$).

OVD-Explorer formulates the value distribution $Z^{\pi}$ in two ways using Eq.~\ref{4-2-e15} and Eq.~\ref{4-2-e17}, abbreviated in the   following as OVDE\_G and OVDE\_Q, respectively.
Intuitively, Gaussian distribution is expected to help more when the environment randomness follows a unimodal distribution, and multivariate uniform distribution is more flexible and suitable for scenarios with multi-modal distributions. 

\subsection{Analysis of OVD-Explorer}
\label{sec4.4}
To analyzes how OVD-Explorer \textit{optimistically} explores the whole areas and performs \textit{noise-aware} exploration at the same time, an intuitive example involving two actions is adopted.
According to Proposition~\ref{the}, the behavior policy of OVD-Explorer maximizes $\mathbf{F}^\pi(s)$, which is proportional to the CDF value $\Phi_{Z^\pi}(\bar{z}(s,a))$. Supposing an agent need to select an actions between $a_1$ and $a_2$ to explore, Fig.~\ref{fig:two-action-example}(a) and (b) illustrate the CDF value (shaded area) for each action.

In these cases, the value distribution $Z^\pi(s, a)$ is specified  as Gaussian (Eq.~\ref{4-2-e15}), and the sampled optimistic value $\bar{z}(s,a)$ is specified  as the mean of OVD $\mu_{\bar{Z}}(s,a)$ (Eq.~\ref{4-2-e13}).
At state $s$, we assume that the means of $Z^\pi$ at actions $a_1$ and $a_2$ are the same for ease of clarification. 

\textit{Optimistic exploration:} Fig.~\ref{fig:two-action-example}(a) illustrates how OVD-Explorer achieves an optimistic exploration. 
Assuming the noise at $a_1$ and $a_2$ is equal, but epistemic uncertainty is higher at $a_1$, then $\mu_{\bar{Z}}(s,a_1)>\mu_{\bar{Z}}(s,a_2)$ and the CDF value is larger at $a_1$. Therefore, OVD-Explorer prefers $a_1$ with high epistemic uncertainty for an optimistic exploration. 

\begin{figure}[t]
\label{fig:CDF}
 \centering

    \subfigure []{
    \includegraphics[width=3.2cm]{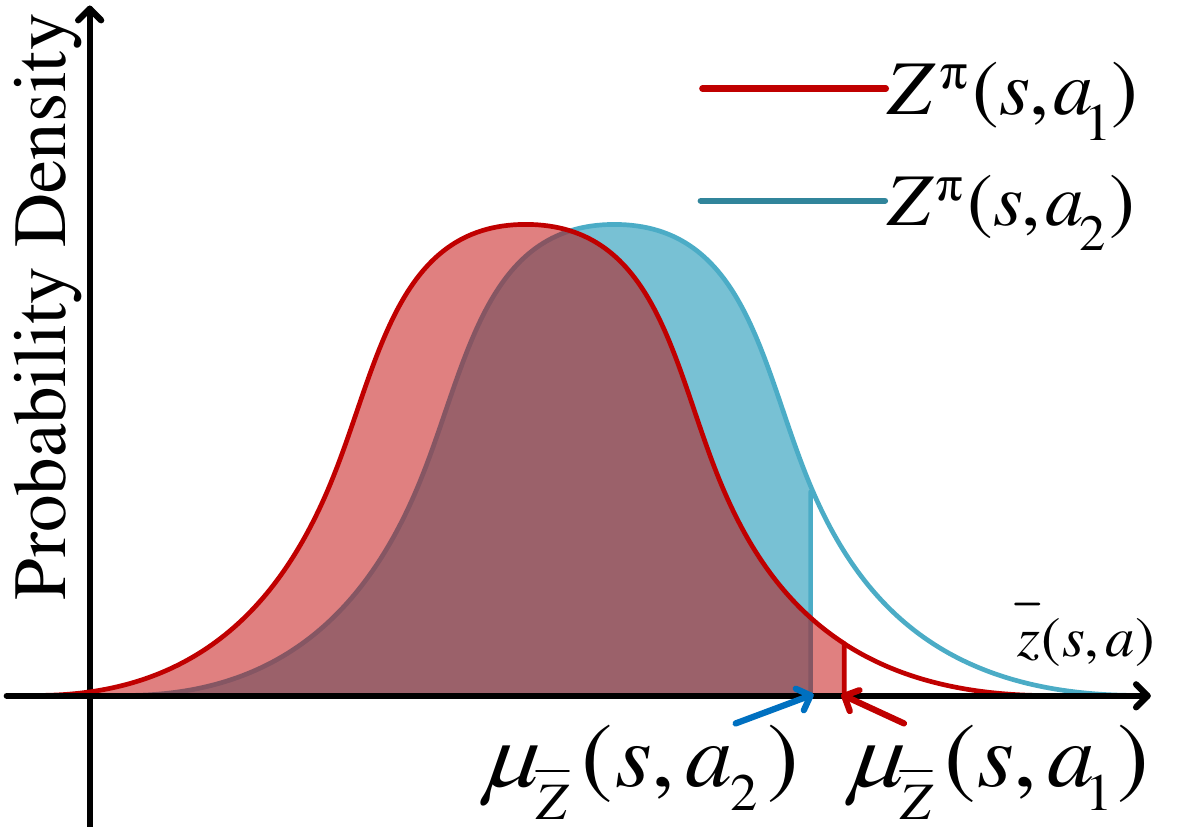}
    }
    \hspace{-1mm}
    \subfigure[]{
    \includegraphics[width=3.4cm]{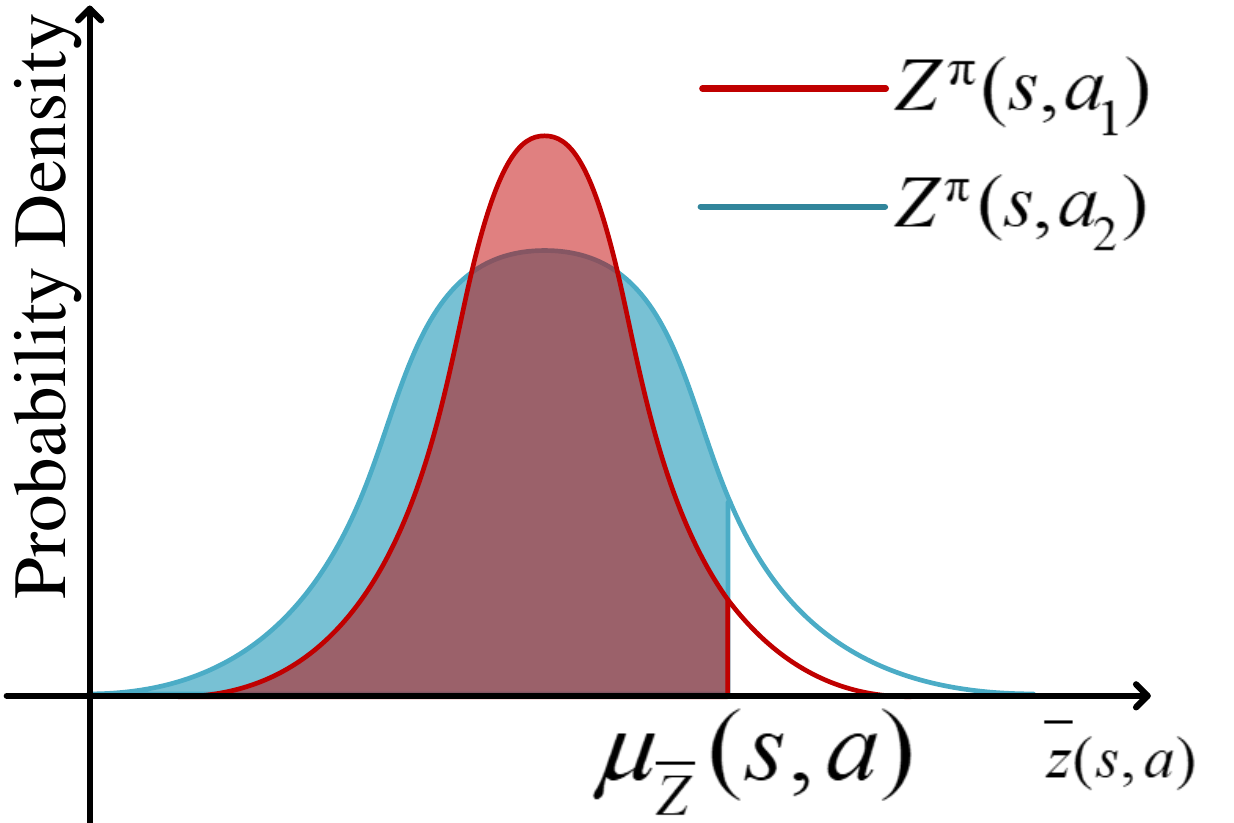}
    }
\caption{How OVD-Explorer explores (a) optimistically about epistemic uncertainty, (b) pessimistically about noise. 
}
\label{fig:two-action-example}
\end{figure}

\textit{Noise-aware exploration:} Fig.~\ref{fig:two-action-example}(b) demonstrates how OVD-Explorer behave noise-aware to avoid the area with higher noise (aleatoric uncertainty). When both actions have equal epistemic uncertainty, $\mu_{\bar{Z}}(s,a_1)$ = $\mu_{\bar{Z}}(s,a_2)$, and noise is lower at $a_1$ (PDF curve of $Z^\pi(s, a_1)$ is ``thinner and taller''), the CDF value will be larger at $a_1$. 
In such a case, OVD-Explorer prefers action $a_1$ with lower aleatoric uncertainty (i.e., lower noise) for a noise-aware exploration.

\textit{Adaptivity.} In the early training, the noise estimations of all actions are nearly identical (Fig.~\ref{fig:two-action-example}(a)), and the exploration is primarily guided by epistemic uncertainty. After sufficient training, the epistemic uncertainty decreases, while the noise estimation converges to true environment randomness (Fig.~\ref{fig:two-action-example}(b)). At this point, exploration strategy tends to be noise-avoiding.
OVD-Explorer seeks an adaptive balance of noise-aware optimistic exploration throughout the exploration process, which is a significant advantage compared to other OFU-based methods.

\section{OVD-Explorer for RL Algorithms}
\label{4-3}

For continuous RL, solving the argmax operator in Eq.~\ref{eq4-0-1} is intractable. In this section, aiming at maximizing ${\bf{F}}^\pi(s)$, we use a gradient-based approach to generate the behavior policy, and incorporate it with policy-based algorithms.

We denote the policy learned by any policy-based algorithm as $\pi_\phi$, parameterized by $\phi$.
To avoid the gap between $\pi_\phi$ and the training data collected by behavior policy $\pi_E$, we derive $\pi_E$ in the vicinity of $\pi_\phi$.
Then, aiming at maximizing ${\bf{F}}^{\pi_\phi}(s)$, we derive its gradient regarding the policy $\nabla_\phi   {\bf{F}}^{\pi_\phi}(s)$ using automatic differentiation and generate behavior policy $\pi_E$ by performing gradient ascent based on $\pi_\phi$. Thus $\pi_E$ can guide exploration towards maximizing the exploration ability continuously, performing noise-aware optimistic exploration.
Concretely, Proposition~\ref{prop-2} shows how to calculate $\pi_E$. 
\begin{proposition}
\label{prop-2}
Based on any policy $\pi_\phi=\mathcal{N}(\mu_\phi, \sigma_\phi)$, the OVD-Explorer behavior policy $\pi_E=\mathcal{N}(\mu_E, \Sigma_E)$ at given state $s$ is as follows:
\begin{equation}
\label{eq4-3-8}
 \mu_E=\mu_{\phi}+\alpha\mathbb{E}_{\bar{Z}^\pi}\left[m \times \frac{\partial \bar{z} (s, a)}{\partial a}|_{a=\mu_{\phi}}\right],
\end{equation}
and
\begin{equation}
\Sigma_E = \sigma_\phi.
\end{equation}
In specific, $m =  \log \frac{\Phi_{Z^\pi(s, \mu_{\phi})}(\bar{z} (s, \mu_{\phi}))}{C}+1$, $\bar{z}(s, a)$ is a sample from OVD $\bar{Z}^\pi$, and $\alpha$ controls the step size of the update along the gradient direction, representing the exploration degree~(see proof in App.~\ref{proof:prop}).
\end{proposition}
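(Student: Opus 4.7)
The plan is to obtain $\pi_E$ as one step of gradient ascent on the exploration ability $\mathbf{F}^{\pi_\phi}(s)$ from Proposition~\ref{the}, evaluated at the current parameters of $\pi_\phi = \mathcal{N}(\mu_\phi, \sigma_\phi)$. Because the objective is an expectation over $a \sim \pi_\phi(\cdot \mid s)$, the first step is to apply the reparameterization trick $a = \mu_\phi + \sigma_\phi \epsilon$ with $\epsilon \sim \mathcal{N}(0, I)$, so that the dependence of the integrand on $\phi$ becomes explicit and differentiable. The same can be done for the inner sample $\bar{z}(s,a) \sim \bar{Z}^\pi(s,a)$ through the OVD in Eq.~\ref{4-2-z*}, making $\bar{z}$ a differentiable function of $a$.

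Next, write the integrand as $g(u) := u \log(u/C)$ with $u = \Phi_{Z^\pi(s,a)}(\bar{z}(s,a))$. The univariate calculus $g'(u) = \log(u/C) + 1$ produces exactly the scalar factor $m$ that appears in the statement. Applying the chain rule with respect to $a$, the dominant contribution to $\partial_a u$ comes through its first argument, $\partial_a u \propto \partial_a \bar{z}(s,a)$, which yields the vector factor $\partial_a \bar{z}(s,a)$ in the final formula. Taking the outer derivative with respect to $\mu_\phi$, for which the reparameterization gives $\partial a / \partial \mu_\phi = I$, and then evaluating the chain-rule product at the current mean (the standard first-order Taylor approximation appropriate for a single gradient step of size $\alpha$), one arrives at the claimed expression for $\mu_E$.

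For the scale, differentiation with respect to $\sigma_\phi$ through the reparameterization pulls down an extra factor $\epsilon$ inside the expectation; the same first-order evaluation at $a = \mu_\phi$ then makes this expectation vanish by the symmetry of $\epsilon$, so the one-step update leaves the scale unchanged, $\Sigma_E = \sigma_\phi$. This in turn preserves exploration width along directions orthogonal to the noise-aware optimistic direction, which is exactly what the statement asserts. The main obstacle I expect is a clean justification of the chain-rule step $\partial_a \Phi_{Z^\pi(s,a)}(\bar{z}(s,a)) \propto \partial_a \bar{z}(s,a)$, since $\Phi$ depends on $a$ both through its argument and through the distribution $Z^\pi(s,a)$ itself, and the density factor $p_{Z^\pi}(\bar{z})$ emerging from the argument derivative must be absorbed into the step-size $\alpha$ (or treated as locally constant under a linearization around $\mu_\phi$); the rest of the proof is routine bookkeeping with the chain rule and the reparameterization trick.
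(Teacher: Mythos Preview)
Your derivation of $\mu_E$ is essentially the same as the paper's: differentiate $g(u)=u\log(u/C)$ to produce the scalar $m=\log(u/C)+1$, chain through $u=\Phi_{Z^\pi}(\bar z(s,a))$ to pick up $\partial_a\bar z(s,a)$, and absorb the leftover density factor $p_{Z^\pi}(\bar z)$ into the step size $\alpha$. The paper is just slightly more direct than you are: instead of invoking the reparameterization $a=\mu_\phi+\sigma_\phi\epsilon$ and then Taylor-collapsing to $a=\mu_\phi$, it replaces the outer expectation over $a\sim\pi_\phi$ by a point evaluation at $a=\mu_\phi$ from the outset, defines the reduced objective $\hat{\mathbf F}^\pi(s,\mu)=\mathbb{E}_{\bar Z^\pi}[\Phi_{Z^\pi}(\bar z(s,\mu))\log(\Phi_{Z^\pi}(\bar z(s,\mu))/C)]$, and differentiates in $\mu$. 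It explicitly records the intermediate factor $\hat m=\phi_{Z^\pi}(\bar z)\,m$ (with $\phi$ the pdf) before dropping the pdf into $\alpha$, which is exactly the ``obstacle'' you anticipated.

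The one genuine difference is $\Sigma_E=\sigma_\phi$. You \emph{derive} it by arguing that the reparameterized gradient in $\sigma_\phi$ carries an extra factor of $\epsilon$ and hence vanishes at first order by odd symmetry. The paper does not derive this at all: it simply \emph{sets} $\Sigma_E=\sigma_\phi$ as a design choice, following the OAC convention, and then restricts the optimization to the mean only. Your symmetry argument is a reasonable post-hoc justification for that choice, but you should be aware that in the paper it is an assumption, not a conclusion.
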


The expectation $\mathbb{E}_{\bar{Z}^\pi}$ can be estimated by K samples, then Eq.~\ref{eq4-3-8} is simplifies as:
\begin{equation}
\label{eq4-3-7}
    \mu_E=\mu_{\phi}+\frac{\alpha m}{K}\sum_{i=1}^{K} \frac{\partial \bar{z}_i (s, a)}{\partial a}|_{a=\mu_{\phi}}. 
\end{equation}
\begin{algorithm}[tb]
\caption{Behavior policy generation at step $t$.}
\label{alg1}
\textbf{Input}: Current state $ s_t $,  current value distribution estimators $\theta_1, \theta_2$, current policy network $\phi$.\\
\textbf{Output}: Behavior policy $\pi_E$.

\begin{algorithmic}[1]
\STATE Obtain policy $\pi_\phi(\cdot|s_t) \sim \mathcal{N}(\mu_\phi(s_t), \sigma_\phi(s_t))$
\STATE \textcolor{gray}{// Construct the distributions of return and upper bound}
\STATE Construct OVD $\bar{Z}^\pi(s_t,  \mu_\phi(s_t))$ using \text{Eq.~}\ref{4-2-z*}
\STATE Construct  $Z^{\pi}(s_t,  \mu_\phi(s_t)) $ using \text{Eq.~}\ref{4-2-e15}\text{ or }\ref{4-2-e17}
\STATE \textcolor{gray}{// Calculate the behavior policy}
\STATE Calculate the behavior policy's mean $\mu_E$ using \text{Eq.~}\ref{eq4-3-7}
\STATE \textbf{return} $\pi_E \sim \mathcal{N}(\mu_E, \sigma_\phi(s_t))$
\end{algorithmic}
\end{algorithm}

Algorithm~\ref{alg1} summarizes the procedure to generate a behavior policy at step $t$ of OVD-Explorer. 
Following Algorithm~\ref{alg1}, OVD-Explorer can be integrated with any existing policy-based RL algorithms from a distributional perspective, to render a stable and well-performed algorithm.
Specifically, given state $s_t$, based on the current policy (Line 1), by constructing the optimistic value distribution $\bar{Z}^\pi$ as well as the value distribution $Z^{\pi}$ of the policy (Line 3-4), the behavior policy derived from OVD-Explorer can be calculated directly using Proposition~\ref{prop-2} (Line 6-7).

\section{Experiments}
\label{sec:exp}

To reveal the consistency between our theoretical analysis and the performance of OVD-Explorer, and demonstrate the significant advantage over other advanced methods, we conduct experiments mainly for the following questions: \\
\textbf{\textit{RQ1 (Exploration ability)}}: Can OVD-Explorer explore as a noise-aware optimistic manner as expected? \\
\textbf{\textit{RQ2 (Performance)}}: Can  OVD-Explorer perform notable advantages on common continuous control benchmarks?\\
Due to space constraints, more experimental details and evaluation results can be found in the appendix.

\subsection{Baseline Algorithms and Implementation Details}

Our baseline algorithms include SAC~\cite{HaarnojaZAL18SAC}, DSAC~\cite{dsac}, and DOAC, an extension of the scalar Q-value  within the OAC~\cite{DBLP:conf/nips/CiosekVLH19} to distributional Q-value.
Our implementation of OVD-Explorer is based on the OAC repository, also refers to the code of DSAC \footnote{https://github.com/xtma/dsac} and softlearning \footnote{https://github.com/rail-berkeley/softlearning}. 
We implement OVD-Explorer\_G and OVD-Explorer\_Q (or abbreviated as OVDE\_G and OVDE\_Q), representing approaches to formulate the value distribution $Z^{\pi}$ using Eq.~\ref{4-2-e15} (\texttt{torch.distributions.Normal}) or Eq.~\ref{4-2-e17}, respectively. 
The key hyper-parameters associated with the exploration, i.e., the exploration ratio $\alpha$ and the uncertainty ratio $\beta$, are determined by grid search, with detailed information presented in the Appendix. Moreover, the hyperparameters related to the training procedure remain consistent across all algorithms.

All experiments are performed on NVIDIA GeForce RTX 2080 Ti 11GB graphics card. 
To counteract the randomness from a statistical perspective, we conduct multiple trials using different seeds. The final results of each trial are collected based on the mean undiscounted episodic return over the last 8\% epoch (or up to the last 100 epochs) to ensure impartiality and minimize bias.

\begin{figure}[t]
\centering
        \includegraphics[width=0.13\textwidth]{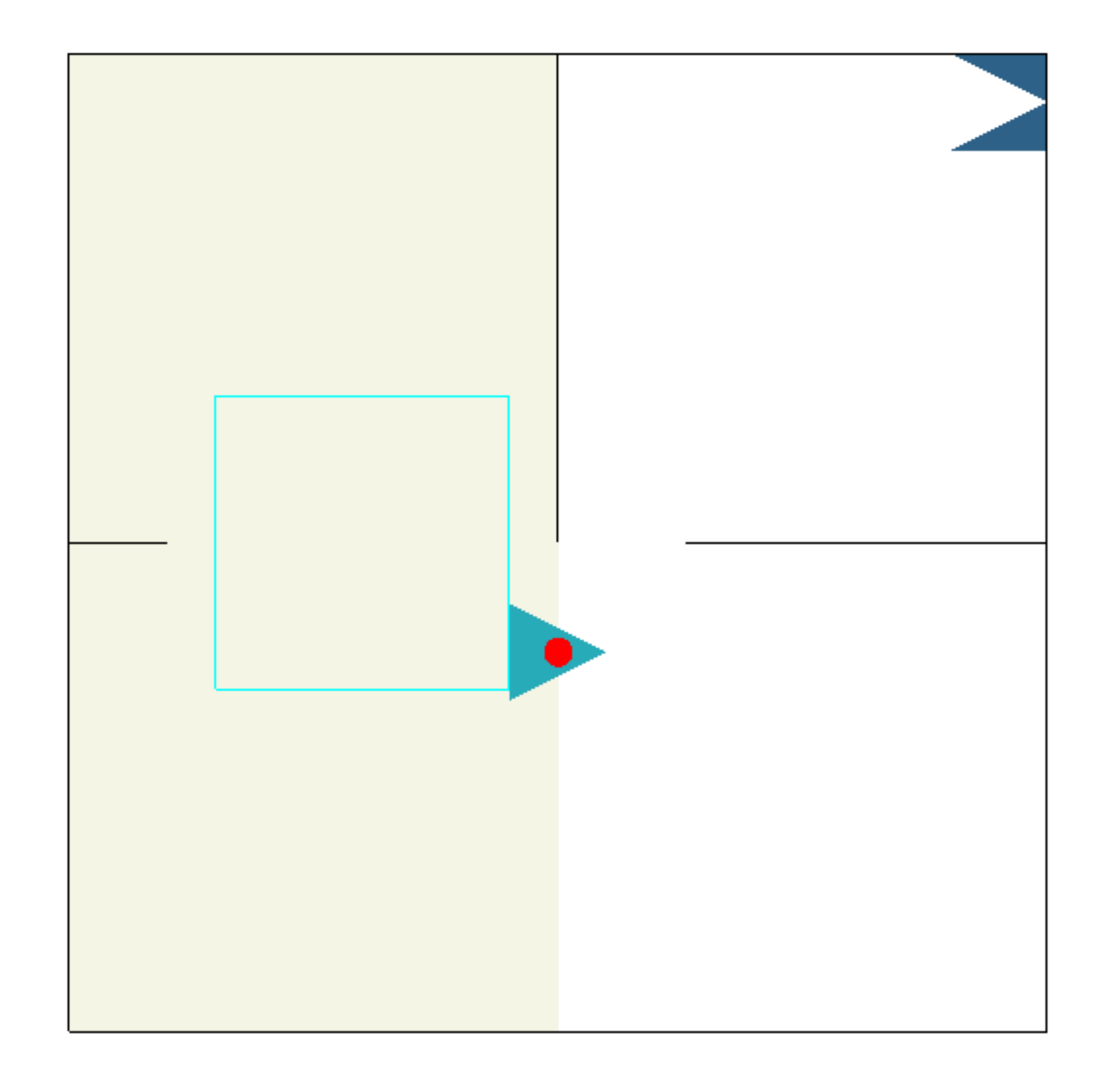}
        \includegraphics[width=0.31\textwidth]{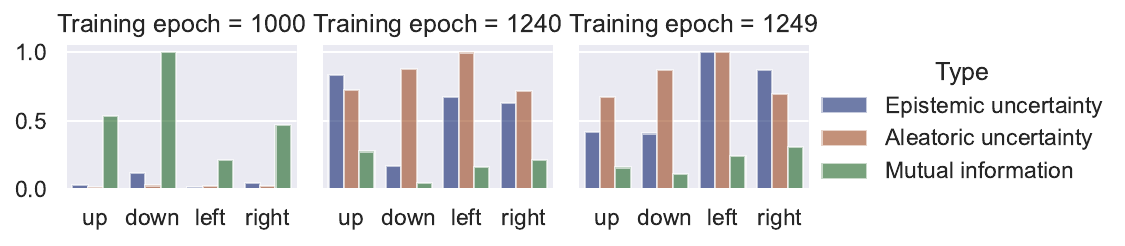}
      
    \caption{GridChaos. \textbf{Left}: In this map, the areas with darker background color have higher noise injected, and the agent aims at reaching the goal at the top right. \textbf{Right}: The values of uncertainty and exploration ability.}
    \label{fig:gridchaos_main}
\end{figure}

\subsection{Exploration in GridChaos (RQ1) }
\label{sec:rq1}
To illustrate that OVD-Explorer guides noise-aware optimistic exploration, we first evaluate OVD-Explorer on GridChaos. 
The GridChaos task is characterized by heterogeneous noise and sparse reward, making it particularly challenging and necessitating a robust capacity for noise-aware optimistic exploration to successfully accomplish the task.

\subsubsection{GridChaos Task}
GridChaos is built on OpenAI's Gym toolkit, as shown in Fig.~\ref{fig:gridchaos_main}(a). In GridChaos, the cyan triangle is under the agent's control, with the objective of reaching the fixed dark blue goal located at the top right corner.
The state is its current coordinate, and the action is a two-dimensional vector including the movement angle and distance. An episode terminates when the agent reaches the goal or maximum steps (typically 100). Also, it receives a +100 reward when reaching the goal, and otherwise 0. 
To simulate noise, heterogeneous Gaussian noise is injected into the state transitions.
Fig.~\ref{fig:gridchaos_main}(b) shows a basic performance comparison of algorithms, and show our advantage. More detailed performance are illustrated in the following.

\subsubsection{Exploration Patterns Analysis}
We first analyse the exploration pattern facilitated by OVD-Explorer.
We compare values of uncertainty and the exploration ability measurement~(in Proposition~\ref{the}) corresponding to distinct actions at the state in Fig.~\ref{fig:gridchaos_main}(a). 
Fig.~\ref{fig:gridchaos_main}(c) shows the values obtained at the 1249th training epoch. 
Basically, Fig.~\ref{fig:gridchaos_main}(c) shows that estimated aleatoric uncertainty (noise) of left is the highest, aligning with the environment's inherent attribute. This indicates that OVD-Explorer models the noise properly.
Further, OVD-explorer encourages to explore towards right, where the exploration ability (in green) is higher.
It implies that OVD-explorer balances the optimistic and noise in exploration, aligning with our intended objective.
Nevertheless, if the noise is not considered in exploration, the agent would be guided towards left, where the epistemic uncertainty is higher, then the agent may be trapped due to the high randomness in this area, potentially explaining why DOAC fails to effectively address such a stochastic task.

\subsubsection{Evaluation on Various Noise Scales}

To further empirically prove our strength, we test OVD-Explorer in GridChaos with various noise scales settings, as outlined in row \textit{A-E} in Tab.~\ref{tab:ablation-gridchaos1}. The column \textit{Noise setup} shows different scale of environmental heterogeneous Gaussian noise injected in four quadrants of Cartesian coordinate system. Note that the variance is not reported, as the mean values offer a comprehensive representation of results across 5 seeds. For instance, when the average result approaches 20, it indicates that only one seed successfully achieved the goal (obtaining a reward of +100) in the end. The row \textit{S} is the standard GridChaos as shown in Fig.~\ref{fig:gridchaos_main}(a).
Note that OVDE-m is specifically designed for the second set of experiment (see Sec.~\ref{sec:gridchaos_noise_goal}), thus we omit its results in this experiment. 
Remarkably, the results indicate that OVD-Explorer consistently achieves better performance across all the tested settings, particularly in scenarios with high levels of noise. This underscores OVD-Explorer's exploration capability in such noisy tasks, leading to more efficient learning and faster convergence towards the goal (see column \textit{FRG}).

Also, we make observations in the case without noise (row \textit{E}). Here, DSAC fails in any run across 5 seeds, while DOAC reachs the goal in one run but at a slower pace. OVD-Explorer achieves the goal swiftly in one run. This highlight the capability of OVD-Explorer and the highly challenging nature of the task, emphasizing the significance of employing a robust exploration strategy. 

\begin{table}[t]
\tiny
\begin{center}
\begin{tabular}{>{\centering}p{0.01cm}>{\raggedright}p{0.2cm}>{\raggedright}p{0.2cm}>{\raggedright}p{0.2cm}>{\raggedright}p{0.2cm}>{\raggedright}p{0.4cm}>{\raggedright}p{0.4cm}>{\raggedright}p{0.4cm}>{\raggedright}p{0.25cm}>{\raggedright}p{0.25cm}>{\raggedright}p{0.4cm}}
\toprule 
 & \multicolumn{4}{c}{{\tiny{}Noise setup}} & \multicolumn{3}{c}{{\tiny{}Average return}} & \multicolumn{3}{c}{{\tiny{}FRG epoch}}\tabularnewline
 & {\tiny{}1 } & {\tiny{}2 } & {\tiny{}3 } & {\tiny{}4 } & {\tiny{}DSAC } & {\tiny{}OVDE } & {\tiny{}DOAC } & {\tiny{}DSAC} & {\tiny{}OVDE} & {\tiny{}DOAC}\tabularnewline
\midrule 
{\tiny{}s} & {\tiny{}0.1 } & {\tiny{}0.5 } & {\tiny{}0.5 } & {\tiny{}0.1 } & {\tiny{}0.00} & \textbf{\tiny{}59.30}  & {\tiny{}3.02} & {\tiny{}1250+ } & \textbf{\tiny{}229}{\tiny{} }& {\tiny{}1222 }\tabularnewline
{\tiny{}a} & {\tiny{}0.0 } & {\tiny{}0.5 } & {\tiny{}0.1 } & {\tiny{}0.1 } & {\tiny{}18.94} & \textbf{\tiny{}58.99}  & {\tiny{}38.42 } & {\tiny{}1161 } & \textbf{\tiny{}180}{\tiny{} }  & {\tiny{}662 }\tabularnewline
{\tiny{}b} & {\tiny{}0.0 } & {\tiny{}0.05 } & {\tiny{}0.01 } & {\tiny{}0.01 } & {\tiny{}39.78} & \textbf{\tiny{}79.52}  & {\tiny{}18.71} & {\tiny{}694 } & \textbf{\tiny{}144}{\tiny{} } & {\tiny{}846 }\tabularnewline
{\tiny{}c} & {\tiny{}0.05 } & {\tiny{}0.1 } & {\tiny{}0.1 } & {\tiny{}0.05 } & {\tiny{}0.05} & \textbf{\tiny{}39.64} & {\tiny{}20.59} & {\tiny{}1250+ } & \textbf{\tiny{}180}{\tiny{} } & {\tiny{}309 }\tabularnewline
{\tiny{}d} & {\tiny{}0.001} & {\tiny{}0.005} & {\tiny{}0.005} & {\tiny{}0.001} & {\tiny{}20.00} & \textbf{\tiny{}40.46}  & {\tiny{}39.99} & {\tiny{}\textbf{284} } & \textbf{\tiny{}276}{\tiny{} }  & {\tiny{}\textbf{321} }\tabularnewline
{\tiny{}e} & {\tiny{}0.0 } & {\tiny{}0.0 } & {\tiny{}0.0 } & {\tiny{}0.0 } & {\tiny{}0.00} & \textbf{\tiny{}20.20}  & {\tiny{}14.60 } & {\tiny{}1250+ } & \textbf{\tiny{}185}{\tiny{} }& {\tiny{}1118 }\tabularnewline

\bottomrule
\end{tabular}
\caption{The averaged return of 5 runs for GridChaos (the first part). FRG epoch means the minimum training epochs to \textit{Firstly Reach the Goal} before totally 1250 epochs.}
\label{tab:ablation-gridchaos1}
\end{center}
\end{table}

\begin{table}[t]
\tiny
\begin{center}
\begin{tabular}{>{\centering}p{0.01cm}>{\raggedright}p{0.18cm}>{\raggedright}p{0.18cm}>{\raggedright}p{0.18cm}>{\raggedright}p{0.18cm}>{\raggedright}p{0.38cm}>{\raggedright}p{0.38cm}>{\raggedright}p{0.55cm}>{\raggedright}p{0.38cm}>{\raggedright}p{0.25cm}>{\raggedright}p{0.25cm}>{\raggedright}p{0.4cm}}
\toprule 
 & \multicolumn{4}{c}{{\tiny{}Noise setup}} & \multicolumn{4}{c}{{\tiny{}Average return}} & \multicolumn{3}{c}{{\tiny{}FRG epoch}}\tabularnewline
 & {\tiny{}1 } & {\tiny{}2 } & {\tiny{}3 } & {\tiny{}4 } & {\tiny{}DSAC } & {\tiny{}OVDE } & {\tiny{}OVDE(m) } & {\tiny{}DOAC } & {\tiny{}DSAC} & {\tiny{}OVDE} & {\tiny{}DOAC}\tabularnewline
\midrule 

{\tiny{}f} & {\tiny{}0.1 } & {\tiny{}0.05 } & {\tiny{}0.05 } & {\tiny{}0.1 } & {\tiny{}0.00} & {\tiny{}19.84} & \textbf{\tiny{}39.96}& {\tiny{}20.14} & {\tiny{}1250+ } & \textbf{\tiny{}188}{\tiny{} }  & {\tiny{}\textbf{233} }\tabularnewline
{\tiny{}g} & {\tiny{}0.05 } & {\tiny{}0.005} & {\tiny{}0.005} & {\tiny{}0.05 } & {\tiny{}0.00} & \textbf{\tiny{}20.69} & \textbf{\tiny{}20.07}{\tiny{}} & {\tiny{}0.00} & {\tiny{}1250+ } & \textbf{\tiny{}247}{\tiny{} } & {\tiny{}1250+ }\tabularnewline
{\tiny{}h} & {\tiny{}0.01 } & {\tiny{}0.005} & {\tiny{}0.005} & {\tiny{}0.01 } & {\tiny{}0.0 } & {\tiny{}40.00} & \textbf{\tiny{}60.00}& {\tiny{}39.99 } & {\tiny{}1250+ } & \textbf{\tiny{}200}{\tiny{} } & {\tiny{}{301} }\tabularnewline
{\tiny{}i} & {\tiny{}0.005 } & {\tiny{}0.001} & {\tiny{}0.001} & {\tiny{}0.005 } & {\tiny{}20.00 } & \textbf{\tiny{}39.98} & {\tiny{}20.00} & {\tiny{}20.00} & {\tiny{}\textbf{236} } & {\tiny{}\textbf{312} }  & {\tiny{}\textbf{296} }\tabularnewline
\bottomrule
\end{tabular}
\caption{The averaged return of 5 runs (the second part).}
\label{tab:ablation-gridchaos2}
\end{center}
\end{table}

\begin{table*}[t]
\scriptsize
\begin{center}
\renewcommand{\arraystretch}{0.825}
\begin{tabular}{lr@{\extracolsep{0pt}.}lr@{\extracolsep{0pt}.}lr@{\extracolsep{0pt}.}lr@{\extracolsep{0pt}.}lr@{\extracolsep{0pt}.}lr@{\extracolsep{0pt}.}l}
\toprule 
\textbf{Task}  & \multicolumn{2}{c}{\textbf{Epoch} } & \multicolumn{2}{c}{\textbf{SAC} } & \multicolumn{2}{c}{\textbf{DSAC }} & \multicolumn{2}{c}{\textbf{DOAC} } & \multicolumn{2}{c}{\textbf{OVD-Explorer\_G} } & \multicolumn{2}{c}\textbf{{OVD-Explorer\_Q} }\tabularnewline
\midrule 
Ant-v2  & \multicolumn{2}{c}{2500 } & 4867&8$\pm$1658.7  & 6385&9$\pm$1287.2  & 6625&4$\pm$746.8  & 7175&3$\pm$789.0  & \textbf{7382}&\textbf{3}$\pm$466.6 \tabularnewline
HalfCheetah-v2  & \multicolumn{2}{c}{2500 } & 11619&8$\pm$1642.01  & 13348&4$\pm$1957.1  & 12987&6$\pm$148.1  & 14796&2$\pm$1473.2  & \textbf{16484}&\textbf{3}$\pm$1373.75 \tabularnewline
Hopper-v2  & \multicolumn{2}{c}{1250 } & \textbf{2593}&\textbf{5}$\pm$574.7  & 2506&0$\pm$390.56  & 2353&0$\pm$754.1  & 2394&6$\pm$496.6  & 2559&3$\pm$384.5 \tabularnewline
Reacher-v2  & \multicolumn{2}{c}{250 } & -22&7$\pm$2.0  & -12&2$\pm$1.6  & -18&7$\pm$1.7  & -11&6$\pm$1.0  & \textbf{-11}&\textbf{3}$\pm$1.2 \tabularnewline
InvDbPendulum-v2  & \multicolumn{2}{c}{300 } & 9306&2$\pm$89.5 & 8916&9$\pm$1041.7  & 5798&5$\pm$3439.0  & 9263&8$\pm$189.1  & \textbf{9355}&\textbf{0}$\pm$12.1 \tabularnewline
\midrule
N-Ant-v2  & \multicolumn{2}{c}{2500 } & 222&96$\pm$41.93  & 465&34$\pm$53.94  & 344&71$\pm$20.39  & \textbf{524}&\textbf{16}$\pm$10.54  & 513&77$\pm$17.87 \tabularnewline
N-HalfCheetah-v2  & \multicolumn{2}{c}{1250 } & 368&57$\pm$28.01  & 431&81$\pm$39.41  & 402&26$\pm$37.27  & 447&3$\pm$38.57  & \textbf{453}&\textbf{56}$\pm$55.97 \tabularnewline
N-Hopper-v2  & \multicolumn{2}{c}{1250 } & 213&71$\pm$21.97  & {238}&{62}$\pm$19.89  & \textbf{252}&\textbf{53}$\pm$13.07  & 234&88$\pm$15.24  & {239}&{43}$\pm$9.90 \tabularnewline
N-Pusher-v2  & \multicolumn{2}{c}{1250 } & -50&57$\pm$20.65  & -27&33$\pm$3.79  & -29&82$\pm$4.29  & \textbf{-25}&\textbf{69}$\pm$3.57  & -26&13$\pm$3.63 \tabularnewline
N-InvDbPendulum-v2  & \multicolumn{2}{c}{300 } & 931&63$\pm$7.14  & \textbf{932}&\textbf{81}$\pm$1.87  & 381&87$\pm$139.36  & \textbf{932}&\textbf{70}$\pm$2.26  & \textbf{933}&\textbf{54}$\pm$2.69 \tabularnewline
\midrule
\textbf{Average} (standard tasks) & \multicolumn{2}{c}{} & \multicolumn{2}{c}{5672.92} & \multicolumn{2}{c}{6229.00}	& \multicolumn{2}{c}{5549.16}	& \multicolumn{2}{c}{6723.66}	& \multicolumn{2}{c}{\textbf{7153.92}}  \tabularnewline
\textbf{Average} (noisy tasks) & \multicolumn{2}{c}{} &  \multicolumn{2}{c}{337.26} &	\multicolumn{2}{c}{408.25} &	\multicolumn{2}{c}{270.31} &	\multicolumn{2}{c}{\textbf{422.67}}	& \multicolumn{2}{c}{\textbf{422.83}} \tabularnewline
\bottomrule
\end{tabular}
\caption{Comparisons of algorithms on five standard and five noisy tasks. The averaged performance and standard deviation of 10 runs are reported. The training epoch count is shown in column \textit{epoch}, and the best values of each row are shown in bold.}
\label{tab:performance-mujoco}
\end{center}
\end{table*}

\subsubsection{Evaluation on Tasks in Which the Noise is High around the Goal}
\label{sec:gridchaos_noise_goal}
To verify whether the noise avoidance ability of OVD-Explorer dominates the exploration process when the noise around the target is higher, we conduct the experiment where the noise in the right half (where the goal is located), is set larger.
The results are shown in row \textit{F-G} in Tab.~\ref{tab:ablation-gridchaos2}. Note that we use OVDE to denote the usual implementation that pessimistically estimates the value distribution (i.e., using Eq.~\ref{4-2-e15}). Besides, OVDE(m) denotes the implementation that does not pessimistically estimate the value distribution (i.e., we modify the mean of Gaussian distribution $Z^\pi$ in Eq.~\ref{4-2-e15} from the lower bound to expected value of the $Q$ estimation $\mu(s, a)$ as in Eq.~\ref{4-2-e13}). 

Overall, in most cases, OVD-Explorer guides better exploration and perform better than baselines. This highlights OVD-Explorer's ability to handle various scenarios effectively, even in tasks with higher noise levels around the goal. 

Moreover, an intriguing observation is that OVD-Explorer may exhibit better performance when the pessimistic estimation is turned off in the presence of higher noise around the goal (see column OVDE(m)). This finding suggests that excessive pessimism may not be necessary when there is a crucial need to explore areas characterized by high aleatoric uncertainty. In such cases, a more balanced approach may lead to improved results.

\begin{figure}[t]
\centering

\subfigure[250 steps]{
\includegraphics[width=4.3cm]{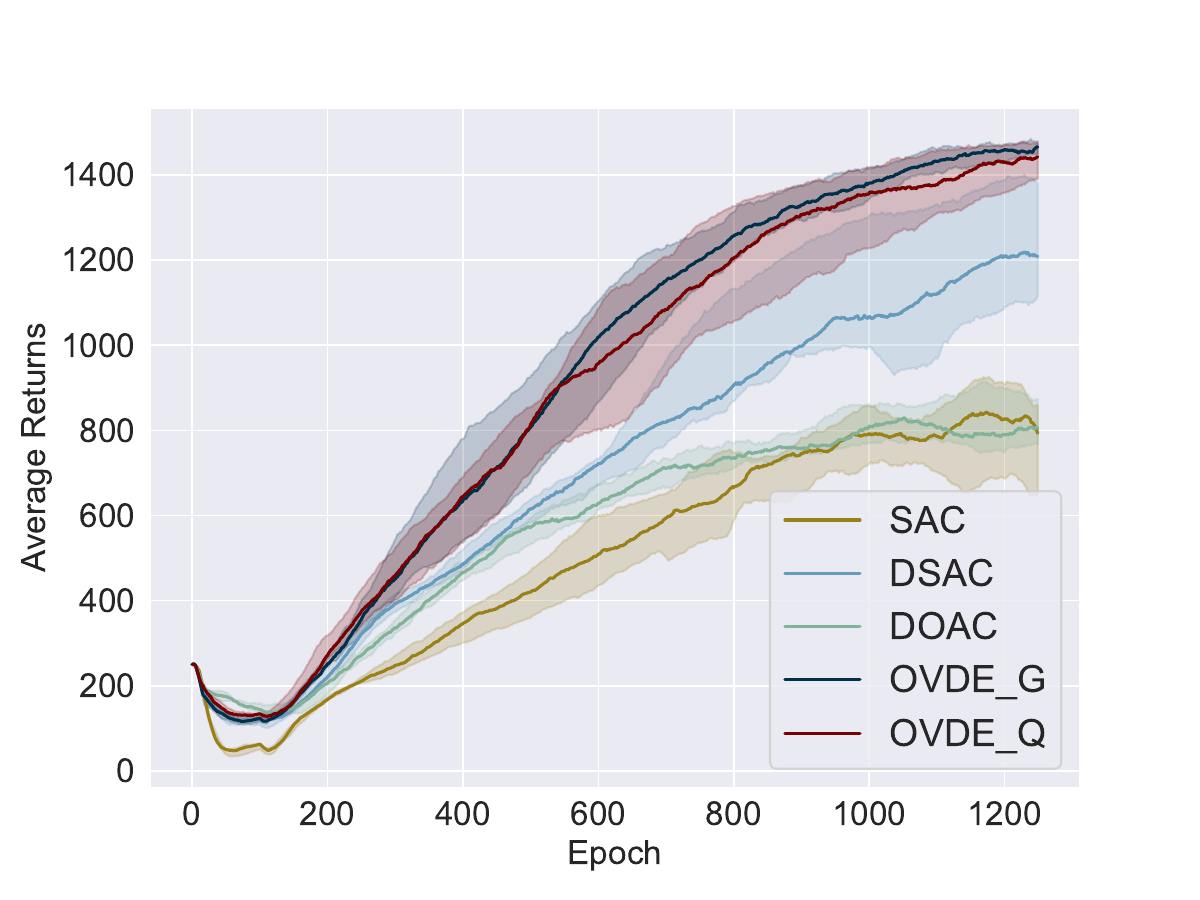}
}
\hspace{-8mm}
\subfigure[500 steps]{
\includegraphics[width=4.3cm]{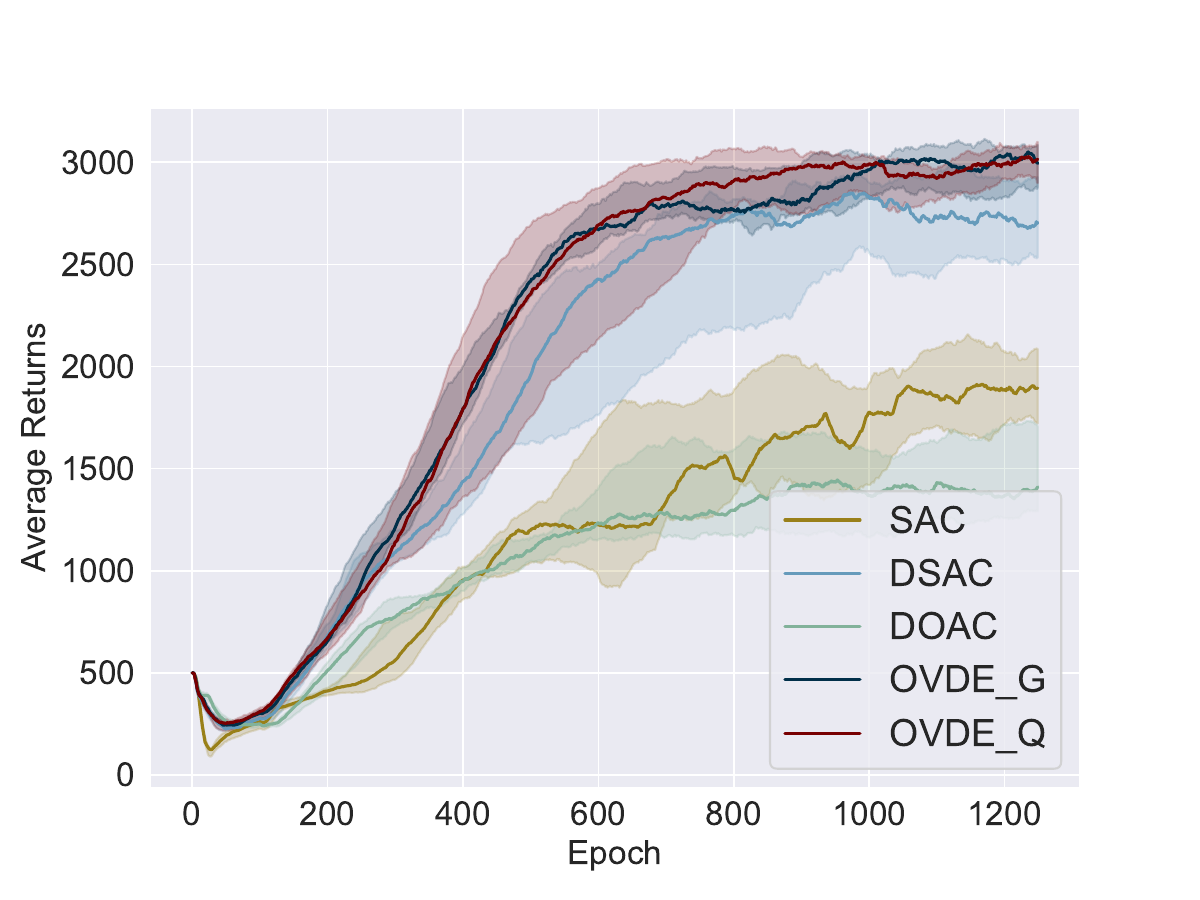}
}

\caption{Training curves on Noisy Ant-v2 tasks with different maximum episodic length setup. The sub-title of each figure represents the episodic horizon. We report the median of returns and the interquartile range of 10 runs. Curves are smoothed uniformly for visual clarity.}
\label{fig:ablation-horizon-1}
\end{figure}

\subsection{Performance on Mujoco Tasks (RQ2)}
\label{sec:rq2}

To showcase the broader efficacy of OVD-Explorer, we conduct experiments encompassing 5 standard and 5 stochastic tasks based on wildly-used continuous control benchmark, Gym Mujoco. Tab.~\ref{tab:performance-mujoco} shows the averaged performance and standard deviation of 10 seeds. 
It is important to note that for the standard tasks\footnote{\url{https://github.com/openai/gym/tree/master/gym/envs/mujoco}}, the dynamics are deterministic, and any observed noise is ascribed to the stochastic policy employed. Conversely, in the case of the five noisy tasks (indicated by the prefix N-), Gaussian noise of varying scales is randomly injected into each state transition.  The investigation yields compelling insights into the capabilities of OVD-Explorer. 
\par

Primarily, OVD-Explorer can perform stably in standard task. The results reveal that DSAC consistently outperforms SAC, underscoring the advantage that value distribution brings to policy evaluation. 
Notably, OVD-Explorer surpasses baseline algorithms significantly, particularly in high-dimensional tasks, such as Ant-v2 and HalfCheetah-v2. 
These evaluations on standard tasks convincingly demonstrate OVD-Explorer's remarkable proficiency in promoting optimistic exploration, highlighting its universal efficacy in exploration capabilities.

Second, these results underscore the efficacy of OVD-Explorer in exploring noisy environments while avoiding the adverse impact of noise. This is exemplified by experiments on 5 noisy tasks (Tab.~\ref{tab:performance-mujoco}) and Noisy Ant-v2 tasks with varying maximum episodic lengths (Fig.~\ref{fig:ablation-horizon}, detailed results in the appendix).
Notably, DOAC's performance is even inferior to DSAC in most tasks, indicating that the presence of heteroscedastic noise significantly interferes with the exploration process guided by DOAC. In contrast, OVD-Explorer exhibits substantial advantages over SAC and DOAC, notably outperforming DSAC in most cases.

Furthermore, regarding the two implementations (namely OVDE\_G and OVDE\_Q) of OVD-Explorer, we observe that OVDE\_Q consistently demonstrates greater stability. 
The key distinction between these implementations lies in the formulations of $Z^\pi(s, a)$. OVDE\_Q's employment of quantile distribution offers higher flexibility, allowing for a more accurate characterization of the value distribution. Conversely, OVDE\_ G, reliant on the Gaussian prior, exhibits limited capacity in this regard, leading to a relatively diminished performance in some cases.

\section{Conclusion}
\label{sec:con}
In this paper, we have presented OVD-Explorer, a novel noise-aware optimistic exploration method for continuous RL. By introducing a unique measurement of exploration ability and maximizing it, OVD-Explorer effectively generates a behavior policy that adheres to the OFU principle. Also, it intelligently avoids excessive exploration in areas with high noise, thereby mitigating the adverse effects of noise. Consistently across tasks with no noise as well as various forms of noise, the experiment underscores our performance advantages.
Moving forward, we recognize the potential for extending OVD-Explorer to discrete tasks and even Multi-agent tasks. This will enhance the versatility of OVD-Explorer, affording it the capability to effectively confront the challenges of exploration in noisy environments that are widespread across diverse real-world scenarios.

\section{Acknowledgments}
This work is supported by the National Key R\&D Program of China (Grant No. 2022ZD0116402), the National Natural Science Foundation of China (Grant Nos. 92370132, 62106172).

\bibliography{main}
\clearpage
\appendix
\section{Proof of Proposition~\ref{the} and Proposition~\ref{prop-2}}
\label{app:theorem}



In this appendix we prove Proposition~\ref{the} and Proposition~\ref{prop-2}.

\subsection{Proof of Proposition~\ref{the}}
\label{proof:themi}
In order to prove the Proposition~\ref{the}, we first propose the following lemma about ${\bf{F^{\pi}}}(s)$.
\begin{lemma}
\label{lemma1}
The integral of all mutual information between upper bound distribution of legal action $\bar{Z}^\pi(s,a')$ and policy $\pi(\cdot|s)$ at state $s$, i.e. $ {\bf{F}}^\pi(s)$ , is: 
\begin{equation}
\begin{aligned}
\label{eq4-1-2}
      {\bf{F}}^\pi(s) 
      &=\int\limits_{\substack{a}}\mathop{\mathbb{E}}_{\substack{\bar{z}(s, a)\\\sim \bar{Z}^\pi(s, a)}}  & \left[ p(a|\bar{z}(s,a), s) \log \frac{p(a|\bar{z}(s,a), s)}{\pi(a|s)}  \right] \dif{a},
\end{aligned}
\end{equation}
where $p(a|\bar{z}(s,a), s)$ represents the posterior probability distribution of policy given current state $s$ and the sampled upper bound of return $\bar{z}(s, a)$.
\end{lemma}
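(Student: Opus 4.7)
}
The plan is to start from the definition ${\bf{F}}^\pi(s)=\int_{a'}{\bf{MI}}(\bar{Z}^\pi(s,a');\pi(\cdot|s)\mid s)\dif a'$ and unfold the inner mutual information in a posterior-weighted KL-divergence form, then relabel the integration variable. Concretely, for each fixed $a'$ the relevant joint law on $(a,\bar z)$ conditioned on $s$ is induced by first sampling an action $a\sim\pi(\cdot|s)$ as the ``prior'' and then reading the sampled upper-bound return $\bar z\sim\bar{Z}^\pi(s,a)$. Using the standard identity $I(A;\bar Z\mid s)=\mathbb{E}_{\bar z}\bigl[D_{\mathrm{KL}}(p(A\mid\bar z,s)\,\|\,\pi(A\mid s))\bigr]$ rewrites each pointwise MI as an expectation over $\bar z$ of $\sum_{a} p(a\mid\bar z,s)\log\frac{p(a\mid\bar z,s)}{\pi(a\mid s)}$, which is the integrand appearing in the lemma.

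Next I would invoke Bayes' rule in the form $p(a\mid\bar z,s)=\frac{p(\bar z\mid a,s)\,\pi(a\mid s)}{p(\bar z\mid s)}$ to re-express the posterior probability in terms of the generative model already at hand, namely the likelihood $p(\bar z\mid a,s)$ encoded by the OVD $\bar{Z}^\pi(s,a)$ and the action prior $\pi(a\mid s)$. This substitution lets me swap the roles of the ``weight variable'' and the ``outer action index'' using Fubini's theorem: the outer integral over $a'$ and the inner expectation over $\bar z\sim\bar Z^\pi(s,a')$ can be interchanged with an integral over $a$ against $\bar z(s,a)\sim\bar Z^\pi(s,a)$, because $a'$ is only a dummy index labelling which upper-bound distribution is being evaluated. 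After the swap, relabelling $a'\mapsto a$ yields exactly
\[
{\bf{F}}^\pi(s)=\int_a \mathbb{E}_{\bar z(s,a)\sim \bar Z^\pi(s,a)}\!\left[p(a\mid\bar z(s,a),s)\log\frac{p(a\mid\bar z(s,a),s)}{\pi(a\mid s)}\right]\dif a,
\]
which is the claimed form.

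The main obstacle will be pinning down the posterior-weighted interpretation of ${\bf MI}(\bar Z^\pi(s,a');\pi(\cdot|s)\mid s)$ in a way that is consistent with the paper's definition, because $\bar Z^\pi(s,a')$ and $\pi(\cdot|s)$ live on different spaces and therefore only have a joint law once one identifies the ``policy-chosen action'' with the ``action indexing the return distribution.'' I would handle this by explicitly declaring the joint measure $p(a,\bar z\mid s)=\pi(a\mid s)\,p(\bar z\mid a,s)$ as the intended one and then verifying that, under this joint, the integral-of-MI definition collapses to a single expectation with the stated integrand. The remaining steps (Fubini swap, Bayes substitution, dummy renaming) are routine measure-theoretic manipulations and do not require additional assumptions beyond absolute continuity of $\bar Z^\pi(s,a)$ in $a$, which is implicit in the Gaussian/quantile parameterizations used in Sec.~\ref{sec4.2}.
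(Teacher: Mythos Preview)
Your expansion of the mutual information into the posterior-weighted KL form is fine, and so is the Bayes identity you quote. The problem is the step where you say Fubini plus ``relabelling $a'\mapsto a$'' collapses
\[
\int_{a'}\int_{a}\mathbb{E}_{\bar z(s,a')\sim\bar Z^\pi(s,a')}\!\left[p(a\mid\bar z(s,a'),s)\log\frac{p(a\mid\bar z(s,a'),s)}{\pi(a\mid s)}\right]\dif a\,\dif a'
\]
down to the single integral in the lemma. It does not: $a$ and $a'$ are both bound variables in scope simultaneously, and the integrand is not symmetric in them (the log term depends on $a$ alone, while the sampling law for $\bar z$ depends on $a'$ alone). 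No amount of Fubini reordering or Bayes substitution turns a genuine double integral into a single one; you need an additional argument that kills the off-diagonal contribution $a\neq a'$.

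The paper supplies exactly that missing ingredient as an explicit modelling assumption: for decision-making, the probability of choosing action $a$ is taken to be independent of the upper-bound value of any \emph{other} action, i.e.
\[
p(a\mid \bar z(s,a'),s)=\pi(a\mid s)\quad\text{whenever }a\neq a',
\]
which forces the KL integrand to vanish off the diagonal and leaves only the $a=a'$ term. That is what reduces the $\int_{a'}\int_a$ to a single $\int_a$. Your proposal gestures at this identification (``the main obstacle will be pinning down the posterior-weighted interpretation\ldots''), but the resolution you offer---declare the joint $p(a,\bar z\mid s)=\pi(a\mid s)p(\bar z\mid a,s)$ and then swap/rename---never actually removes the outer $a'$ integral that comes from the definition $\mathbf{F}^\pi(s)=\int_{a'}\mathbf{MI}(\cdot)\dif a'$. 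To fix the argument, state and use the independence assumption above explicitly; it is the one non-routine step in the proof.
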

\begin{proof}
From the definition of mutual information, Eq.~\ref{eq:def_mi} is immediately given as:
\begin{equation}
\begin{aligned}
\label{eq-proof-theorem1-1}
&{\bf{F}}^\pi(s) \\
       = & \int\limits_{\substack{a'}}\int\limits_{\substack{a}}\int\limits_{\substack{\bar{z}(s, a')}}  \left[  p(a, \bar{z}(s,a') | s) \log \frac{p(a, \bar{z}(s,a') | s)}{p(\bar{z}(s,a'))\pi(a|s)}   \right] \\&\dif{\bar{z}(s, a')} \dif{a} \dif{a'}
    \\ =& \int\limits_{\substack{a'}}\int\limits_{\substack{a}}\int\limits_{\substack{\bar{z}(s, a')}}  \Biggl[ p(\bar{z}(s,a')) p(a|\bar{z}(s,a'), s) \\ & \log \frac{p(\bar{z}(s,a')) p(a|\bar{z}(s,a'), s)}{p(\bar{z}(s,a'))\pi(a|s)}   \Biggr] \dif{\bar{z}(s, a')} \dif{a} \dif{a'}
    \\ =& \int\limits_{\substack{a'}} \int\limits_{\substack{a}}\mathop{\mathbb{E}}_{\substack{\bar{z}(s, a')\\\sim \bar{Z}^\pi(s, a')}} \left[ p(a|\bar{z}(s,a'), s) \log \frac{p(a|\bar{z}(s,a'), s)}{\pi(a|s)}  \right] \dif{a} \dif{a'},
\end{aligned}
\end{equation}
where the posterior distribution $p(a|\bar{z}(s,a'), s)$ is the probability of choosing action $a$ on the condition of the samples from upper bounds of action $a'$.

Considering that for making decision, the probability of action $a$ is independent to the values of other actions, which means that 
\begin{equation}
    p(a|\bar{z}(s,a'), s) = 
    \begin{cases}
    \pi(a|s)&  a \neq a', \\
    p(a|\bar{z}(s,a), s) & a=a'.
    \end{cases}
\end{equation}
Therefore, Eq.~\ref{eq-proof-theorem1-1} can be further reduced as follows:
\begin{equation}
    \begin{aligned}
         {\bf{F}}^\pi(s) =\int\limits_{\substack{a}}\mathop{\mathbb{E}}_{\substack{\bar{z}(s, a)\\\sim \bar{Z}^\pi(s, a)}} &  \left[ p(a|\bar{z}(s,a), s) \log \frac{p(a|\bar{z}(s,a), s)}{\pi(a|s)}  \right] \dif{a}. \notag
    \end{aligned}
\end{equation}
\end{proof}
Lemma~\ref{lemma1} tells that ${\bf{F^{\pi}}}(s_t)$ is in direct proportion to $p(a|\bar{z}(s,a), s)$, which measures how much it is worth acting under the current policy $\pi(a|s)$ when the upper bound is known.

Next, to measure the posterior probability $p(a|\bar{z}(s,a), s)$, we use a practically effective approach of approximating the posterior probability given upper bound value \citep{pmlr-v70-wang17e, DBLP:journals/corr/abs-2009-01721, DBLP:journals/corr/abs-1910-07003, DBLP:conf/nips/LiXKZ20}.

Specifically, we approximate $p(a|\bar{z}(s,a),s)$ using the prior that $z^\pi(s,a) \leq \bar{z}(s,a)$ with given policy $\pi(s,a)$, since $\bar{z}(s,a)$ is the upper bound of $z^\pi(s,a)$. Hence, we use the indicator function $\mathds{1}_{z^\pi(s,a) \leq \bar z(s,a)}$ to truncate the policy $\pi(s,a)$, and utilize the constant C to normalize the probability, as is shown in the following equation.
\begin{equation*}
    p(a|\bar{z}(s,a),s) \approx \frac{1}{C}\pi(a|s) \mathbb{E}_{z^\pi(s, a) \sim Z^\pi(s, a)}\left[\mathds{1}_{z^\pi(s, a)\leq \bar{z}(s, a)} \right].
\end{equation*}
Here, $\mathbb{E}_{z^\pi(s, a) \sim Z^\pi(s, a)}\left[\mathds{1}_{z^\pi(s, a)\leq \bar{z}(s, a)} \right] = {\Phi_{Z^\pi(s, a)}}(\bar{z} (s, a))$, where $\Phi_{x}$ is the cumulative distribution function (CDF) of $x$, $\bar{Z}^\pi$ and $Z^{\pi}$ are the random variables, whose distributions describe the randomness of the returns, and $\bar{z}(s, a)$ is the value of random variable $\bar{Z}^\pi$. 
Therefore, the posterior probability can be measured as follows,
\begin{equation}
\label{eq4-1-6}
    p(a|\bar{z}(s,a), s) \approx  \frac{1}{C}\pi(a|s) \Phi_{Z^\pi}(\bar{z} (s, a)).
\end{equation}

In our method, we do not use approximation mechanisms about mutual information such as neural network estimation~\citep{BelghaziBROBHC18} and upper bound estimation \citep{ChengHDLGC20}. Instead, we find the correlation between random variables as shown in Eq.~\ref{eq4-1-6}, which helps to approximate mutual information directly.

According to Lemma~\ref{lemma1} and Eq.~\ref{eq4-1-6}, we can give the proof of Proposition~\ref{the} in the following.
\begin{proof}
By Combining Lemma~\ref{lemma1} and Eq.~\ref{eq4-1-6}, ${\bf{F^{\pi}}}(s)$ can be further derived as follows.
\begin{equation}
    \begin{aligned}
&{\bf{F}}^\pi(s) \\
       \approx &\int_{\substack{a\sim\pi(\cdot|s)}} \mathop{\mathbb{E}}_{\bar{z}(s, a)\sim \bar{Z}^\pi(s, a)}
       \Biggl[ \frac{1}{C}\pi(a|s)\Phi_{Z^\pi}(\bar{z} (s, a)) \\ &\log \frac{\pi(a|s)\Phi_{Z^\pi}(\bar{z} (s, a))}{C\pi(a|s)}  \Biggr] \dif{a} \notag\\
       =&\frac{1}{C} \mathop{\mathbb{E}}_{\substack{a\sim\pi(\cdot|s) \\ \bar{z}(s, a)\sim \bar{Z}^\pi(s, a)}}\left[\Phi_{Z^\pi}(\bar{z} (s, a)) \log \frac{\Phi_{Z^\pi}(\bar{z} (s, a))}{C}  \right]  \notag
    \end{aligned}
\end{equation}
Here, the last equality follows from Proposition~\ref{the}.
\end{proof}

\subsection{Proof of Proposition~\ref{prop-2}}
\label{proof:prop}
\begin{proof}
Similar to~\cite{DBLP:conf/nips/CiosekVLH19}, we set the covariance matrix of $\pi_E$ is that of $\pi_\phi$, i.e., $\Sigma_E = \sigma_\phi$. Hence, the OVD-Explorer problem is simplified as finding the $\mu$ that maximizes $\mathbf{\hat{F}}^\pi(s, \mu)$:
\begin{align}
\mathbf{\hat{F}}^\pi(s, \mu) = \mathop{\mathbb{E}}_{ \bar{Z}^\pi}\left[\Phi_{Z^\pi}(\bar{z} (s, \mu)) \log \frac{\Phi_{Z^\pi}(\bar{z} (s, \mu))}{C}  \right] \notag
\end{align}

To ensure that $\pi_E$ samples actions around $\pi_\phi$, we derive $\pi_E$ upon mean $\mu_\phi$ of target policy $\pi_\phi$. In specific, we firstly obtain the gradient of $\mathbf{\hat{F}}(s,\mu)$ at $\pi_\phi$, which is given as follows:
\begin{equation}
    \nabla_a \mathbf{\hat{F}}^\pi(s, \mu)|_{\mu=\mu_{\phi}} = \mathbb{E}_{\bar{Z}^\pi} \left[\hat{m} \times \frac{\partial \bar{z} (s, a)}{\partial a}|_{a=\mu_{\phi}} \right] \notag
\end{equation}
where $\hat{m}= \phi_{Z^\pi(s, \mu_{\phi})}( \bar{z} (s, \mu_{\phi}))(\log \frac{\Phi_{Z^\pi(s, \mu_{\phi})}(\bar{z} (s, \mu_{\phi}))}{C}+1)$,
and $\phi(x)$ is the probability distribution function (pdf). Hence, $\mu_E$ is given as follows:
\begin{equation}
 \mu_E=\mu_{\phi}+\alpha\mathbb{E}_{\bar{Z}^\pi}\left[m \times \frac{\partial \bar{z} (s, a)}{\partial a}|_{a=\mu_{\phi}}\right], \notag
\end{equation}
where $\alpha$ is the step size controlling exploration level and $m =  \log \frac{\Phi_{Z^\pi(s, \mu_{\phi})}(\bar{z} (s, \mu_{\phi}))}{C}+1$.
\end{proof}

\section{Details about OVD-Explorer}
\subsection{The formulation of uncertainties}
Our proposed method considers two types of uncertainty that exist within the RL system, both of which play a crucial role in our approach. Below, we elaborate on the formulation of these uncertainties.

The epistemic uncertainty characterizes the ambiguity of the model arisen from insufficient knowledge, and it tends to be high at state-action pairs that are rarely visited. To estimate the epistemic uncertainty, we leverage the disagreement among ensemble estimators~\cite{DBLP:conf/nips/OsbandBPR16}:
\begin{equation}
\begin{aligned}
\sigma_{\text{epistemic}}^2(s, a) = \mathbb{E}_{i \sim \mathcal{U}(1, N)} \text{var}_{k=1, 2}\hat{Z}_{\tau_i}(s, a; \theta_k),
\end{aligned}
\end{equation}
where $\mathcal{U}$ is uniform distribution, $N$ is the number of quantiles, and $\hat{Z}_{\tau_i}(s,a;\theta_k)$ is the value of the $i$-th quantile drawn from $\hat{Z}(s,a;\theta_k)$.

The aleatoric uncertainty~(noise) arises from the randomness in the environment, which can be attributed to the stochastic nature of policies, rewards, and/or transition probabilities. To model the aleatoric uncertainty (noise), we consider the variance of the value distribution $\bar{Z}^\pi$~\citep{abs-1905-09638} as follows:
\begin{equation}
\label{eq:alea}
     \sigma_{\text{aleatoric}}^2(s, a) = \text{var}_{i \sim \mathcal{U}(1, N)}\left[\mathbb{E}_{k=1, 2}\hat{Z}_{\tau_i}(s, a;\theta_k)\right].
\end{equation}

\subsection{Algorithm 2: OVD-Explorer for SAC}
\label{app:dsac}

the behavior policy generated by OVD-Explorer can be seamlessly integrated with existing policy-based RL algorithms from a distributional perspective, thereby promoting stable and effective exploration. For this purpose, in the context of SAC, we need to estimate the value distribution, resulting in the distributional variant of SAC, referred to as DSAC~\citep{dsac}. Subsequently, we replace the behavior policy with the one generated by Algorithm~\ref{alg1} to interact with the environment. The complete algorithm for our implementation of OVD-Explorer based on SAC is presented in Algorithm~\ref{algo2}. The entire code can be accessed in the supplementary material.
\label{alg2}
\begin{algorithm}[h]
    \caption{OVD-Explorer for DSAC}
    \label{algo2}
\begin{algorithmic}[1]
    \STATE {\bfseries Initialise:} Value networks $\theta_1$, $\theta_2$, policy network $\phi$ and their target networks $\bar{\theta}_1$, $\bar{\theta}_2$, $\bar{\phi}$, quantiles number N, target smoothing coefficient ($\tau$), discount ($\gamma$), an empty replay pool $\mathcal{D}$
    \FOR{each iteration}
        \FOR{each environmental step}
            \STATE $a_t \sim \pi_E(a_t, s_t)$ according to Algorithm~1
            \STATE $\mathcal{D} \leftarrow \mathcal{D} \cup \{(s_t, a_t, r(s_t, a_t), s_{t+1})\}$
        \ENDFOR
        \FOR{each training step}
            \FOR{i = 1 to N}
                \FOR{j = 1 to N}
                    \STATE calculate $ \delta_{i, j}^k, k = {1, 2}$, following Eq.~\ref{3-3-e5}
                \ENDFOR
            \ENDFOR
            \STATE Calculate $\mathcal{L}_{QR}(\theta_k), k = {1, 2}$ using $\delta_{i, j}^k$ following Eq.~\ref{3-2-e2}
            \STATE Update $\theta_k$ with $\nabla \mathcal{L}_{QR}(\theta_k)$
            \STATE Calculate $\mathcal{J}_\pi(\phi)$, following Eq.~\ref{3-3-e7}
            \STATE Update $\phi$ with $\nabla\mathcal{J}_\pi(\phi)$
        \ENDFOR
        \STATE Update target value network with $\bar{\theta}_k \leftarrow \tau \theta_k + (1-\tau)\bar{\theta}_k, k=1, 2$
        \STATE Update target policy network with $\bar{\phi} \leftarrow \tau \phi + (1-\tau)\bar{\phi}$
    \ENDFOR
    \end{algorithmic}
\end{algorithm}

\section{Detailed Experimental Settings}
\subsection{Baseline Algorithms and Implementation Details}

As mentioned in Sec~\ref{4-3}, OVD-Explorer can be integrated with any policy-based DRL algorithm, such as SAC \citep{HaarnojaZAL18SAC} or TD3 \citep{FujimotoHM18}, by constructing optimistic and current value distributions reasonably. In this paper, we implement OVD-Explorer based on SAC. In the following, we first elaborate on the baseline algorithms and implementation details for better reproducibility. 

The baseline algorithms include SAC~\citep{HaarnojaZAL18SAC}, DSAC~\citep{dsac}, and DOAC.
\cite{dsac} has compared the performance of the distributional extension of SAC and TD4 (i.e., DSAC and TD4, respectively), showing that DSAC outperforms TD4 on Mujoco tasks. Thus, we implement OVD-Explorer based on SAC to show its advantage of exploration, comparing with SAC and DSAC. 
Besides, DOAC, the distributional variant of OAC \citep{DBLP:conf/nips/CiosekVLH19}, performing optimistic exploration but ignoring the noise, is involved to illustrate the necessity of noise-aware exploration.

\textit{SAC.} We implement SAC \citep{HaarnojaZAL18SAC} based on the OAC repository\footnote{https://github.com/microsoft/oac-explore}. The results in Ant-v2 and Hopper-v2 are similar to reported results, and we report a better result than OAC's implementation for SAC on HalfCheetah-v2. 

\textit{DSAC.} DSAC is implemented based on SAC, except that the distributional $Q$ function is used instead of the traditional $Q$ function in SAC. We set same hyper-parameters for DSAC and SAC to ensure a fair comparison. In our results, DSAC can guarantee an absolute advantage over SAC in most cases, which is consistent with the previous conclusion. \par

\textit{DOAC.} We implement DOAC based on DSAC as well as the OAC repository. As DSAC shows great advantage due to the distributional value estimation, to ensure a fair comparison, we extend OAC \citep{DBLP:conf/nips/CiosekVLH19} to its distributional version, i.e., DOAC, by replacing the exploration process of DSAC by the behavior policy derived by OAC. We set the hyper-parameters the same as used by OAC in Mujoco,\footnote{That is given by the open source code, where $\beta_{\text{UB}}$ is 4.66 and $\delta$ is 23.53.} and our results of DOAC on Ant-v2 and HalfCheetah-v2 are significantly better than that OAC reported.\par
Finally, we illustrate the summary of the baselines with respect to the optimistic and noise-aware exploration in Table~\ref{tab:baselines}. OVD-Explorer possesses both of these essential exploration attributes, making it a promising and effective method in noisy environments. DOAC demonstrates optimistic exploration but lacks noise-aware capabilities. DSAC and SAC, on the other hand, do not exhibit either optimistic or noise-aware exploration characteristics.
\begin{table}[h]
\caption{Summary of OVD-Explorer and baselines regarding to the optimistic and risk-averse exploration. }
\small
\begin{center}
\begin{tabular}{ccc}
\toprule
\multirow{1}{*}{Algorithm} & Optimistic & Risk-Averse \tabularnewline
\midrule
OVD-Explorer& $\checkmark$ & $\checkmark$ \tabularnewline
DOAC & $\checkmark$ & $\times$ \tabularnewline
DSAC & $\times$ & $\times$ \tabularnewline
SAC & $\times$ & $\times$ \tabularnewline
\bottomrule
\end{tabular}
\end{center}
\label{tab:baselines}
\end{table}

\begin{table*}[t]
\small
\begin{center}
\begin{tabular}{p{2.0cm}p{4.9cm}c|p{4.6cm}}
\toprule
\multicolumn{1}{c}{}  &\multicolumn{1}{c}{\bf Parameter}  &\multicolumn{1}{c|}{} &\multicolumn{1}{c}{\bf Value}
\\\midrule
Training & Discount  & &0.99 \\
 & Target smoothing coefficient  &$\tau$ &5e-3\\
  & Learning rate  & &3e-4\\
  & Optimizer  & &Adam \citep{KingmaB14}\\
   & Batch size  & &256\\
   & Quantiles amount  & &20\\
  & Replay buffer size  & &$1.0\times 10^6$ for Mujoco tasks \\
  &   & &$1.0\times 10^5$ for other tasks \\
  & Environment steps per epoch & & $1.0\times 10^3$ for Mujoco tasks\\
  &  & & $1.0\times 10^2$ for other tasks\\
\midrule
Exploration & Exploration ratio  &$\alpha$ &0.05 (specified otherwise)\\
 & Uncertainty ratio  &$\beta$ &3.2 \\
 & Normalization factor  &$C$ &0.5\\
 \bottomrule
\end{tabular}
\end{center}
\caption{Hyper-parameters in OVD-Explorer}
\label{tab:hyper-parameters}
\end{table*}
\subsection{Hyper-parameters}
We illustrate the gyper-parameters used for training and exploration in Table~\ref{tab:hyper-parameters}.

\section{Discussions about other related works}

\textit{Mutual information used in exploration.} Following OFU principle, OVD-Explorer uses mutual information to define exploration ability for guiding exploration. 
There are some other information-theoretic exploration strategies, 
such as VIME \citep{houthooft2016vime}, which measures the information gain on environment dynamics, and EMI \citep{KimKJLS19EMI}, generating intrinsic reward using prediction error of representation learned by mutual information. 
which can solve sparse reward problem well using intrinsic reward. 
Nevertheless, those methods use mutual information neither on the value distribution, nor for OFU-based exploration. Besides, instead of using the mechanisms for approximating mutual information~\citep{whichmi}, such as variational inference \citep{DBLP:conf/colt/HintonC93,houthooft2016vime} or f-divergence \citep{DBLP:conf/nips/NowozinCT16,KimKJLS19EMI}, we find the correlation between policy and upper bounds distribution through uncertainty, which helps estimate mutual information directly.

\textit{Safe exploration.} Another related line of work focuses on risk-averse exploration or safe exploration, from the perspective of safe RL. Those methods design exploration strategies to achieve risk-averse policy (i.e., CVaR policy)~\citep{efficient_risk_averse, KeramatiDTB20risk-averse}, or the policy with safety constraint~\citep{safeexplorationc20118, DingWYWJ21safeexp}. Those methods have a fundamental difference with our approach. Specifically, our method aims to train the policy that maximizes expected return, concerned with the optimistic exploration in complex stochastic continuous control tasks, rather than a safe policy.

\section{More Results}

In this section, we present a more comprehensive set of experimental results, covering various aspects that are not included in the main text due to space constraints. The additional results are as follows:
\begin{enumerate}
    \item Time Consumption: We provide an analysis of the time consumption of OVD-Explorer in comparison to the baseline algorithms.
    \item Hyperparameter Sensitivity: We investigate the impact of the two hyperparameters in OVD-Explorer and their influence on the algorithm's performance.
    \item Comparison with Non-Uncertainty-Based Strategies: We compare OVD-Explorer with other non-uncertainty-based exploration strategies to demonstrate its superiority in handling noisy environments effectively.
    \item Results on Harder Noisy Tasks: We showcase the performance of OVD-Explorer on more challenging noisy tasks to assess its robustness in highly complex environments.
    \item Exploration Patterns Visualization: We visualize the exploration patterns of OVD-Explorer by tracking state space visitation frequencies and similar measures. This provides insights into how the algorithm adapts its exploration strategy during the learning process.
    \item Comparison of Exploration Patterns: We analyze the differences in exploration patterns between scenarios with high and low noise levels near the goal in the GridChaos task. This analysis sheds light on how OVD-Explorer explores noisy regions differently depending on the noise intensity.
\end{enumerate}

\subsection{Runtime Analysis}

\begin{table}[hbt]
    \centering
    \small
    \begin{tabular}{>{\raggedright}m{2.0cm}>{\centering}m{2.5cm}}
        \toprule
        Method & Relative Runtime Time \tabularnewline 
        \midrule
        SAC & 1.0  \tabularnewline
        DSAC & 1.17  \tabularnewline
        DOAC & 1.19  \tabularnewline
        OVDE\_G & 1.17  \tabularnewline
        OVDE\_Q & 1.21  \tabularnewline
        \bottomrule
    \end{tabular}
    \caption{Computational costs.}
    \label{tab:cost_cmp}
\end{table}
Table~\ref{tab:cost_cmp} shows the time consumption of algorithms relative to SAC.
As can be seen, the distributional value estimation used in DSAC, DOAC and our methods introduces extra time consumption distinctly. Nevertheless, the relative time consumption of OVDE\_G and OVDE\_Q spends approximately 17\% to 21\% more time than SAC to achieve a significant performance gain of nearly 100\%, as demonstrated in Figure 6(b). This indicates that the extra time consumption of OVD-Explorer is well justified by the substantial improvement in exploration efficiency.
Besides, the time consumption of OVDE\_Q is close to that of DSAC, with only a slightly larger variance. This suggests that the additional time consumption of OVDE\_Q is minimal while still achieving better exploration performance, making it a promising choice for practical applications.

\begin{table*}[h]
\scriptsize
\begin{center}
\begin{tabular}{llllll}
\toprule
Task   & DSAC  & DSAC+RND & IDS & OVDE\_G & OVDE\_Q \\
\midrule
Ant-v2                     &  6385.9$\pm$1287.2   & 7308.4$\pm$641.3 & 503.2$\pm$23.3 & 7175.3$\pm$789.0          & \textbf{7382.3}$\pm$466.6  \\
HalfCheetah-v2             & 13348.4$\pm$1957.1  & 12198.1$\pm$2338.3  & 203.3$\pm$33.2 & 14796.2$\pm$1473.2  & \textbf{16484.3}$\pm$1373.75   \\
Hopper-v2                  & 2506.0$\pm$390.56    & 2077.9$\pm$344.1  & 1201.0$\pm$42.2 & 2394.6$\pm$496.6          & \textbf{2559.3}$\pm$384.5  \\
N-HalfCheetah-v2       & 431.81$\pm$39.41    & 409.48$\pm$45.88  & 68.3$\pm$21.3 & 447.3$\pm$38.57   & \textbf{453.56}$\pm$55.97  \\
N-Hopper-v2            & 236.62$\pm$19.89     & 231.46$\pm$9.94  & 136.3$\pm$58.1 & {234.88}$\pm$15.24   & \textbf{239.43}$\pm$9.90  \\
N-Ant-v2 (250)          & 1217.87$\pm$185.00  & 1306.05$\pm$223.18 & 36.3$\pm$63.1 & \textbf{1434.61}$\pm$113.53 & {1340.00}$\pm$221.64   \\

\bottomrule
\end{tabular}
\caption{Comparisons with IDS and RND. For DSAC+RND and IDS, the averaged performance and standard deviation of 5 runs are reported.}
\label{tab:performance-mujoco-RND}
\end{center}
\end{table*}

\subsection{Sensitivity to $\alpha$ and $\beta$}
\label{sec:rq3}

\begin{figure}[h]
    \centering
    \quad \hspace{-5mm}
    \subfigure[$\alpha$ value]{
        \includegraphics[width=3.6cm]{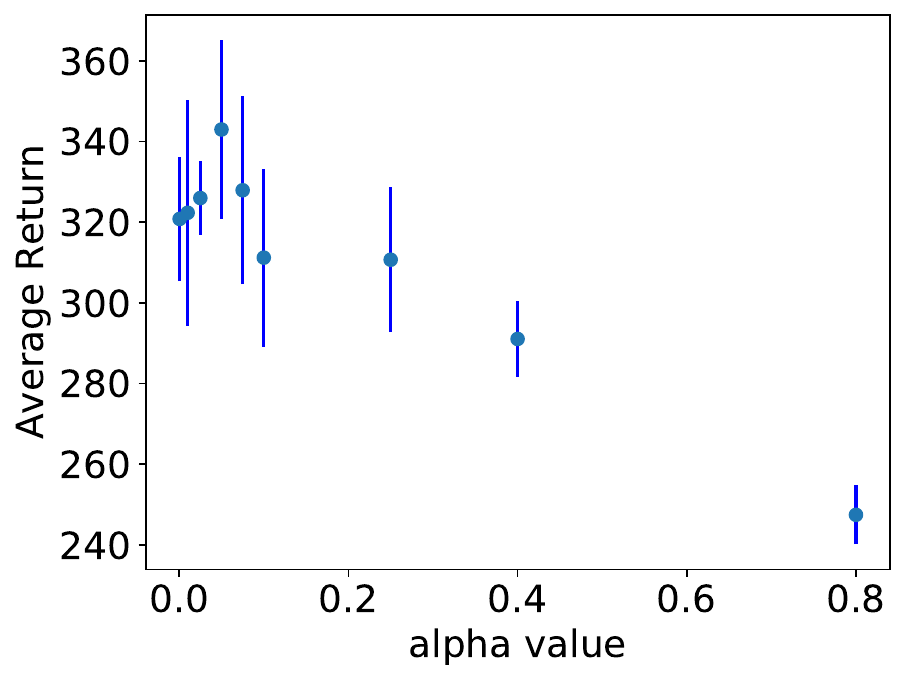}
    }
    \quad \hspace{-5mm}
    \subfigure[$\beta$ value]{
        \includegraphics[width=3.6cm]{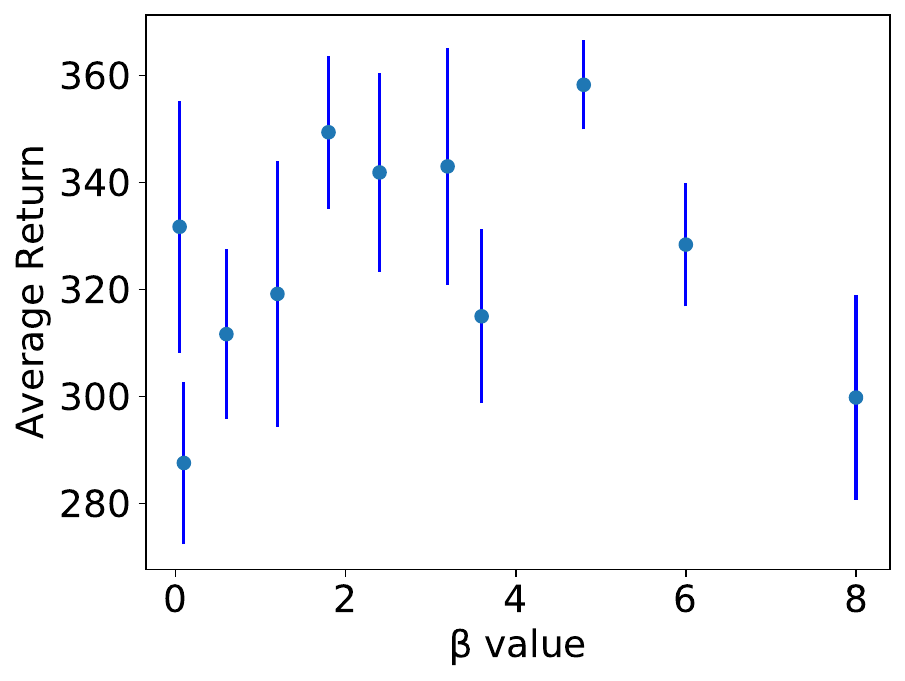}
    }
    \caption{Sensitivity to (a) $\alpha$ and (b) $\beta$. The x-axis indicates values of hyper-parameters, while the y-axis is the evaluation result represented by average episode return and half standard deviation (error bars) over 5 seeds. The 9 different $\alpha$ values are 0.0005, 0.01. 0.025, 0.05, 0.075, 0.1, 0.25, 0.4, 0.8, respectively. The 11 different $\beta$ values are 0.05, 0.1, 0.6, 1.2, 1.8, 2.4, 3.2, 3.6, 4.8, 6.0, 8.0.}
    \label{fig:a_b_exp}
\end{figure}
To determine the appropriate values for the hyperparameters $\alpha$ and $\beta$ in OVD-Explorer, we conduct experiments to investigate their impact on the algorithm's performance.

For the hyperparameter $\alpha$, which controls the distance between the exploration policy $\pi_E$ derived from OVD-Explorer and the given policy $\pi_\phi$, we test several $\alpha$ values on the Noisy Ant-v2 task using OVDE\_G. The results are shown in Fig.~\ref{fig:a_b_exp}(a). It is observed that the performance of OVD-Explorer varies with different $\alpha$ values. If $\alpha$ is too small, OVD-Explorer degenerates to DSAC and lacks optimistic exploration. Conversely, if $\alpha$ is too large, the performance worsens due to a substantial gap between $\pi_E$ and $\pi_\phi$. However, there exists a concentrated range of $\alpha$ values that facilitate more efficient exploration. In our experiments, we uniformly use $\alpha = 0.05$ to showcase the performance of OVD-Explorer.

Regarding the hyperparameter $\beta$, which controls the scale of uncertainty quantification in Eq.~\ref{4-2-e13} and Eq.~\ref{4-2-e15}, influencing $\bar{Z}^\pi$ and $Z^{\pi}$, we conduct experiments on the Noisy Ant-v2 task using OVDE\_G. The sensitivity analysis of $\beta$ is illustrated in Fig.~\ref{fig:a_b_exp}(b). The results demonstrate a broad range of suitable values for $\beta$ that lead to good performance. For our experiments, we set $\beta$ uniformly to be 3.2.

\subsection{Comparison with Other Similar Algorithms}
\label{sec:rq4}
OVD-Explorer achieves optimistic noise-aware exploration, and outperforms the baseline algorithms including SAC, DSAC and DOAC. Additionally, there are other algorithms that share similar or partial objectives with OVD-Explorer, such as optimistic exploration based on intrinsic motivation and optimistic noise-aware exploration for discrete control. To provide a comprehensive comparison, we evaluate OVD-Explorer against these algorithms on three standard Mujoco tasks and three noisy Mujoco tasks, as shown in Table~\ref{tab:performance-mujoco-RND}.

\begin{figure*}[t]
\centering
\subfigure[100 steps]{
\includegraphics[width=4.8cm]{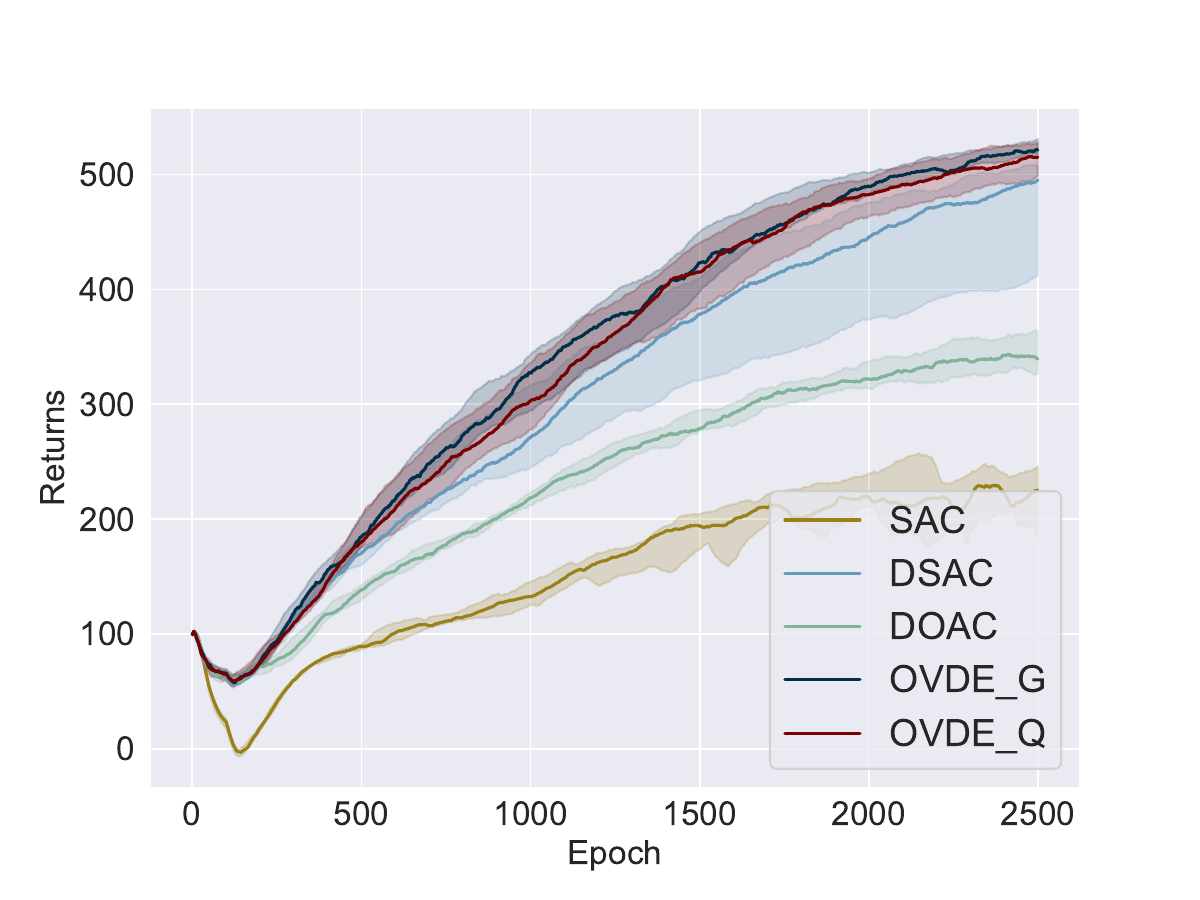}
}
\quad \hspace{-12mm}
\subfigure[250 steps]{
\includegraphics[width=4.8cm]{figs/nant250m.pdf}
}
\quad \hspace{-12mm}
\subfigure[500 steps]{
\includegraphics[width=4.8cm]{figs/nant500_0.05m.pdf}
}
\\
\subfigure[500 steps]{
\includegraphics[width=4.8cm]{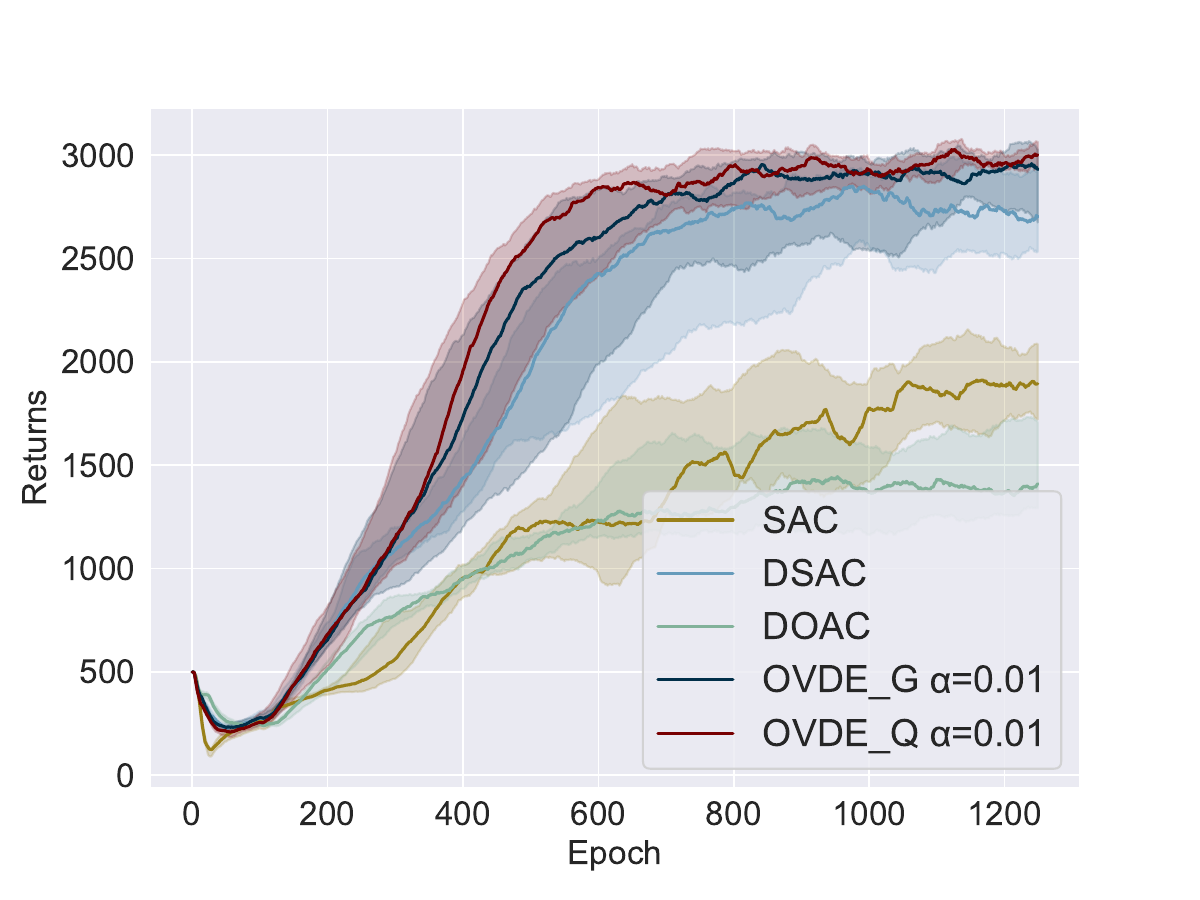}
}
\quad \hspace{-12mm}
\subfigure[750 steps]{
\includegraphics[width=4.8cm]{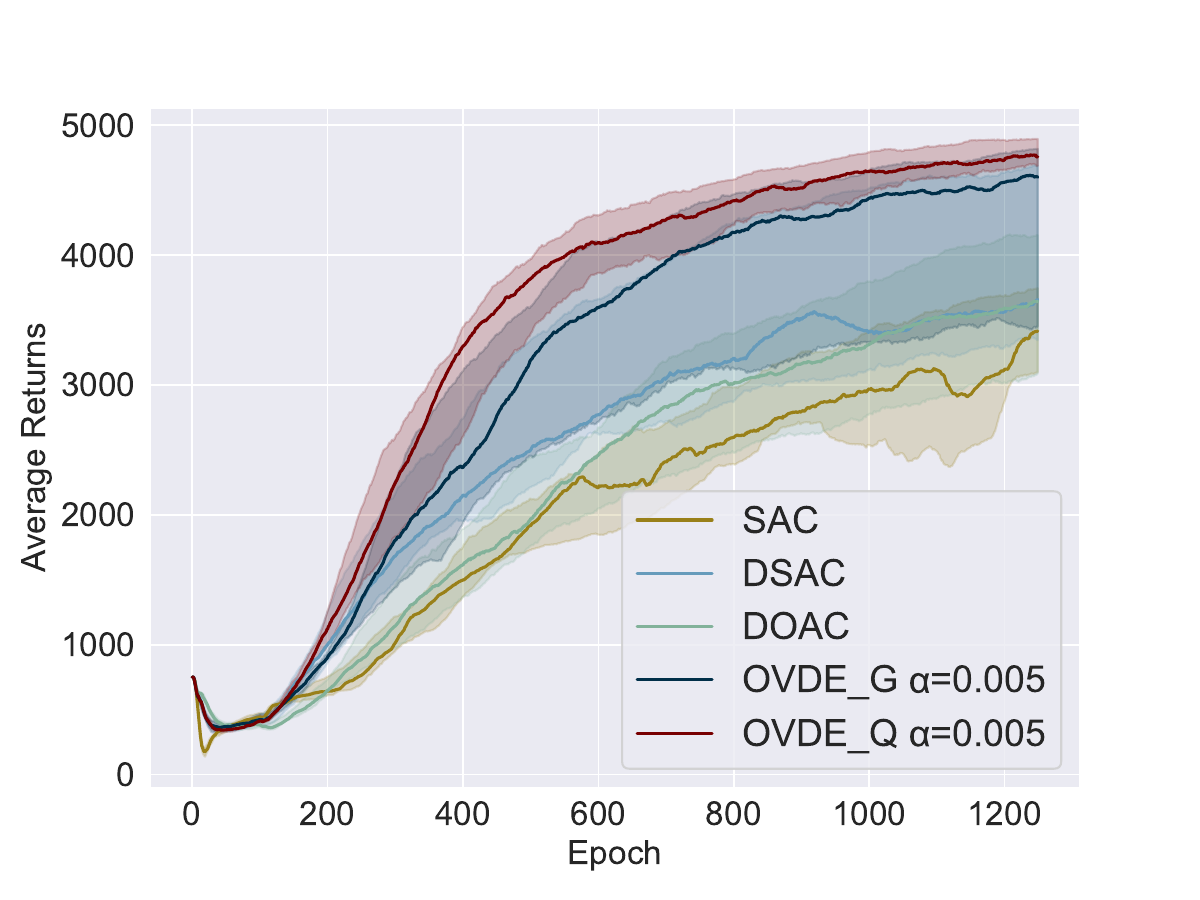}
}
\quad \hspace{-12mm}
\subfigure[1000 steps]{
\includegraphics[width=4.8cm]{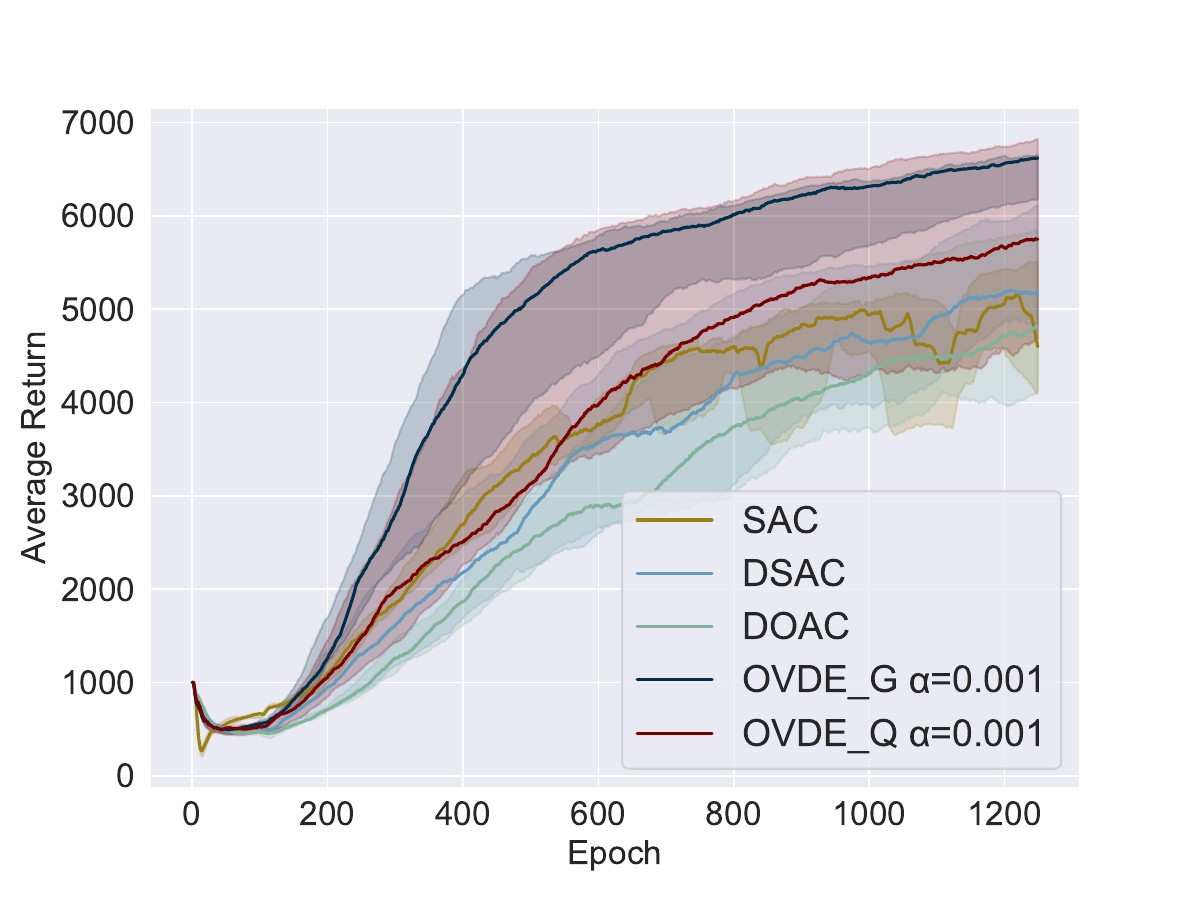}
}

\caption{Training curves on Noisy Ant-v2 tasks with different maximum episodic length setup. The sub-title of each figure represents the episodic horizon, also known as the maximum episode length. We report the median of returns and \textbf{the interquartile range} of 10 runs. Curves are smoothed uniformly for visual clarity.}
\label{fig:ablation-horizon}
\end{figure*}
\subsubsection{RND}
Indeed, RND also follows the principle of Optimism in the Face of Uncertainty (OFU) and utilizes uncertainty estimation through network distillation to drive intrinsic motivation for agent exploration. To ensure fairness in comparison, we implement RND based on DSAC, denoted as DSAC+RND, and present the results in the DSAC+RND column of Table~\ref{tab:performance-mujoco-RND}.

The results clearly indicate that OVD-Explorer maintains a significant performance advantage over DSAC+RND. While DSAC+RND shows promising results on the Ant-v2 task, it does not perform as well on other tasks. This performance gap can be attributed to two main factors.

Firstly, intrinsic motivation-based exploration methods, like RND, are known to exhibit some instability and sensitivity to the scale of intrinsic reward signals. This instability can result in varying performance when applied to different tasks, leading to inconsistent outcomes. Secondly, RND focuses primarily on episodic uncertainty, which may not be as effective in tasks that involve random noise, where the algorithm lacks the ability to adequately manage aleatoric uncertainty. In contrast, OVD-Explorer's ability to handle both episodic and aleatoric uncertainty gives it an edge in effectively addressing noisy exploration challenges.

\subsubsection{Noise-aware Optimistic Exploration for Discrete Control}

For discrete control problems, IDS \citep{DBLP:conf/iclr/NikolovKBK19} exhibits similar optimistic exploration abilities as OVD-Explorer and can also avoid the negative impact of heterogeneous noise. However, it is important to note that IDS is specifically designed for discrete control tasks and may face limitations when applied to continuous control problems.

To assess the performance of IDS in solving continuous control problems, we discretize the continuous action space and present the evaluation results in the IDS column of Table~\ref{tab:performance-mujoco-RND}. The results demonstrate that especially on tasks with larger action dimensions, IDS performs extremely poorly due to dimension explosion.

This observation highlights the significance of designing exploration strategies tailored for continuous reinforcement learning, such as OVD-Explorer. OVD-Explorer's ability to guide optimistic exploration while avoiding excessive exploration in noisy areas proves essential for achieving robust and efficient exploration in continuous control tasks. As evidenced by the results, algorithms like OVD-Explorer play a crucial role in addressing the unique challenges posed by continuous control problems and outperforming approaches designed solely for discrete control scenarios.

\subsection{Evaluation on Harder Noisy Tasks in Mujoco}
\begin{table*}[t]
\centering
\begin{tabular}{>{\raggedright}m{0.8cm}>{\centering}p{0.78cm}>{\centering}p{2.0cm}>{\centering}p{2.0cm}>{\centering}p{2.0cm}>{\centering}p{2.2cm}>{\centering}p{2.2cm}}
\toprule
\centering{}{\scriptsize{}Horizon} & \centering{}{\scriptsize{}$\alpha$} & \centering{}{\scriptsize{}SAC} & \centering{}{\scriptsize{}DSAC} & \centering{}{\scriptsize{}DOAC} & \centering{}{\scriptsize{}OVDE\_G} & \centering{}{\scriptsize{}OVDE\_Q}\tabularnewline
\midrule
\multirow{2}{1.1cm}{\centering{}{\scriptsize{}100}} & \multirow{2}{0.78cm}{\centering{}{\scriptsize{}0.05}} & \centering{}{\scriptsize{}222.96\textpm 41.93} & \centering{}{\scriptsize{}465.34\textpm 53.94} & \centering{}{\scriptsize{}344.71\textpm 20.39} & \centering{}{\scriptsize{}\textbf{524.16\textpm 10.54}} & \centering{}{\scriptsize{}513.77\textpm 17.87}\tabularnewline
\cline{3-7} \cline{4-7} \cline{5-7} \cline{6-7} \cline{7-7} 
 &  & \centering{}{\scriptsize{}\textless0.001} & \centering{}{\scriptsize{}0.013} & \centering{}{\scriptsize{}\textless0.001} & \centering{}{\scriptsize{}-} & \centering{}{\scriptsize{}0.406}\tabularnewline
\hline 
\multirow{2}{1.1cm}{\centering{}{\scriptsize{}250}} & \multirow{2}{0.78cm}{\centering{}{\scriptsize{}0.05}} & \centering{}{\scriptsize{}764.93\textpm 159.5} & \centering{}{\scriptsize{}1217.87\textpm 185.00} & \centering{}{\scriptsize{}840.26\textpm 115.91} & \centering{}\textbf{\scriptsize{}1434.61\textpm 113.53} & \centering{}{\scriptsize{}1340.00\textpm 221.64}\tabularnewline
\cline{3-7} \cline{4-7} \cline{5-7} \cline{6-7} \cline{7-7} 
 &  & \centering{}{\scriptsize{}\textless0.001} & \centering{}{\scriptsize{}0.002} & \centering{}{\scriptsize{}\textless0.001} & \centering{}{\scriptsize{}-} & \centering{}{\scriptsize{}0.414}\tabularnewline
\hline 
\multirow{2}{1.1cm}{\centering{}{\scriptsize{}500}} & \multirow{2}{0.78cm}{\centering{}{\scriptsize{}0.05}} & \centering{}{\scriptsize{}1788.01\textpm 480.22} & \centering{}{\scriptsize{}2613.02\textpm 400.11} & \centering{}{\scriptsize{}1523.75\textpm 368.39} & \centering{}{\scriptsize{}2864.40\textpm 351.54} & \centering{}\textbf{\scriptsize{}2951.81\textpm 230.41}\tabularnewline
\cline{3-7} \cline{4-7} \cline{5-7} \cline{6-7} \cline{7-7} 
 &  & \centering{}{\scriptsize{}\textless0.001} & \centering{}{\scriptsize{}0.035} & \centering{}{\scriptsize{}\textless0.001} & \centering{}{\scriptsize{}0.194} & \centering{}{\scriptsize{}-}\tabularnewline
\hline 
\multirow{2}{1.1cm}{\centering{}{\scriptsize{}500}} & \multirow{2}{0.78cm}{\centering{}{\scriptsize{}0.01}} & \centering{}{\scriptsize{}1788.01\textpm 480.22} & \centering{}{\scriptsize{}2613.02\textpm 400.11} & \centering{}{\scriptsize{}1523.75\textpm 368.39} & \centering{}{\scriptsize{}2831.99\textpm 283.31} & \centering{}\textbf{\scriptsize{}3006.58\textpm 78.92}\tabularnewline
\cline{3-7} \cline{4-7} \cline{5-7} \cline{6-7} \cline{7-7} 
 &  & \centering{}{\scriptsize{}\textless0.001} & \centering{}{\scriptsize{}0.030} & \centering{}{\scriptsize{}\textless0.001} & \centering{}{\scriptsize{}0.619} & \centering{}{\scriptsize{}-}\tabularnewline
\hline 
\multirow{2}{1.1cm}{\centering{}{\scriptsize{}750}} & \multirow{2}{0.78cm}{\centering{}{\scriptsize{}0.005}} & \centering{}{\scriptsize{}3208.58\textpm 676.28} & \centering{}{\scriptsize{}3932.89\textpm 738.76} & \centering{}{\scriptsize{}3601.00\textpm 557.70} & \centering{}{\scriptsize{}4207.39\textpm 721.80} & \centering{}\textbf{\scriptsize{}4469.29\textpm 744.26}\tabularnewline
\cline{3-7} \cline{4-7} \cline{5-7} \cline{6-7} \cline{7-7} 
 &  & \centering{}{\scriptsize{}\textless0.001} & \centering{}{\scriptsize{}0.146} & \centering{}{\scriptsize{}0.003} & \centering{}{\scriptsize{}0.330} & \centering{}{\scriptsize{}-}\tabularnewline
\hline 
\multirow{2}{1.1cm}{\centering{}{\scriptsize{}1000}} & \multirow{2}{0.78cm}{\centering{}{\scriptsize{}0.001}} & \centering{}{\scriptsize{}4628.31\textpm 891.77} & \centering{}{\scriptsize{}5214.21\textpm 1126.09} & \centering{}{\scriptsize{}4926.53\textpm 1027.08} & \centering{}\textbf{\scriptsize{}6214.59\textpm 791.73} & \centering{}{\scriptsize{}5663.66\textpm 1134.54}\tabularnewline
\cline{3-7} \cline{4-7} \cline{5-7} \cline{6-7} \cline{7-7} 
 &  & \centering{}{\scriptsize{}0.027} & \centering{}{\scriptsize{}0.095} & \centering{}{\scriptsize{}0.048} & \centering{}{\scriptsize{}-} & \centering{}{\scriptsize{}0.504}\tabularnewline
\bottomrule

\end{tabular}
    \caption{Comparisons of algorithms on Noisy Ant-v2 tasks with different maximum episodic length setup. The averaged performance and standard deviation of 10 runs are reported, as well as the p-value that indicates the statistical significance of the difference between the proposed algorithm and each baseline algorithm. The best values of each row are shown in bold.}
    \label{tab:noisy_ant}
\end{table*}

In Section~\ref{sec:rq2}, we demonstrate the significant advantage of OVD-Explorer over DSAC and DOAC in stochastic Mujoco tasks. To further empirically verify the efficiency of OVD-Explorer, we detail the evaluation on Noisy Ant-v2 task with different maximum episodic length setup. The results presented in Figure~\ref{fig:ablation-horizon} and Table~\ref{tab:noisy_ant} illustrate that OVD-Explorer consistently outperforms the baseline algorithms across various maximum episodic lengths. Notably, longer maximum episodic lengths present higher task difficulty, particularly for high-dimensional tasks that demand more thorough exploration. From these experiments, we draw two key conclusions:

Firstly, these results further demonstrate the effectiveness of OVD-Explorer in exploring noisy environments. Figure~\ref{fig:ablation-horizon} shows the training curves on Noisy Ant-v2 tasks, including the error bars of interquartile range. Tab.~\ref{tab:noisy_ant} reports the average results and standard deviation, and p-values to show if a significant difference exists comparing baselines. Notably, most p-values comparing OVD-Explorer's performance to baseline algorithms are below 0.05, confirming that OVD-Explorer consistently and efficiently explores noisy environments across different levels of task difficulty.

Secondly, a practical finding is that exploration should be more conservative in harder tasks, where a smaller value of $\alpha$ in OVD-Explorer should be used. In Fig.~\ref{fig:ablation-horizon}(a), (b) and (c), $\alpha$ is set to 0.05 by default. 
In Fig.~\ref{fig:ablation-horizon}(d), (e), and (f), a smaller $\alpha$ is employed to achieve better performance in the more challenging tasks.

These empirical results further reinforce the superiority of OVD-Explorer in effectively addressing exploration challenges in noisy environments with varying degrees of difficulty. This makes it a practical and robust solution for continuous RL tasks with uncertainties and noise.

\begin{figure}[h]
\begin{center}

\subfigure[]{
\includegraphics[width=2.95cm]{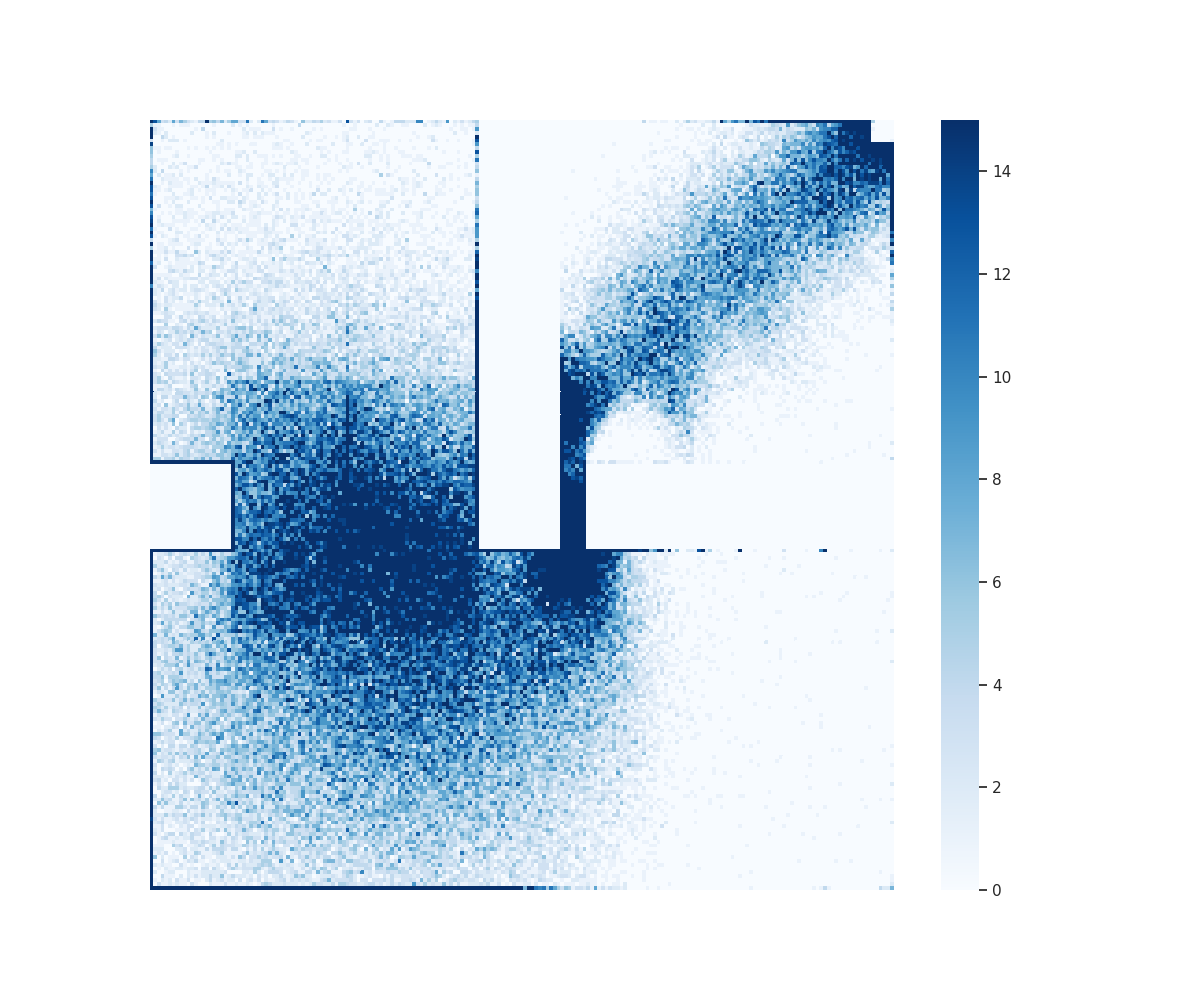}
\label{fig:m1}
}
\hspace{-7.5mm}
\subfigure[]{
\includegraphics[width=2.95cm]{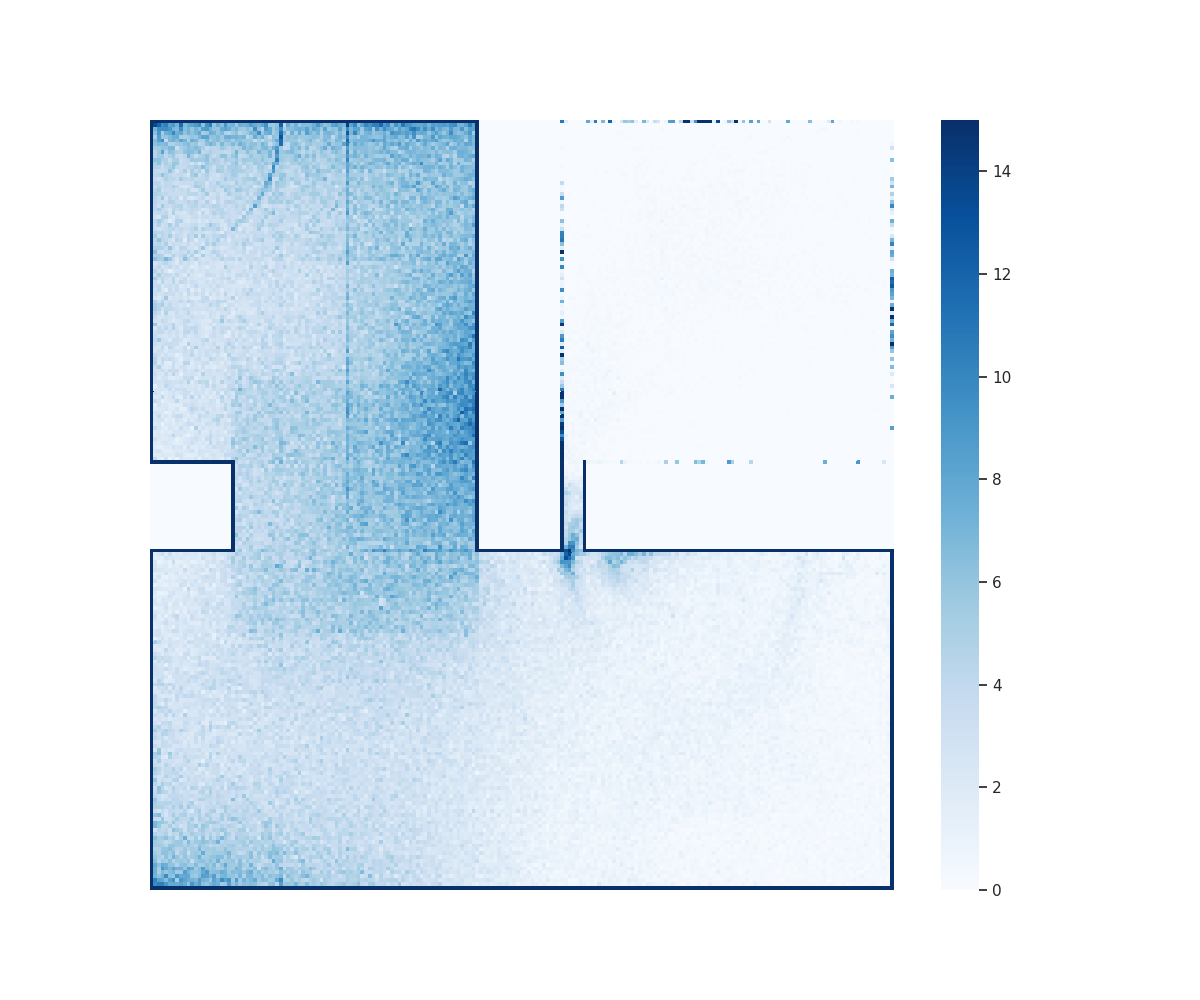}
\label{fig:d1}
}
\hspace{-7.5mm}
\subfigure[]{
\includegraphics[width=2.95cm]{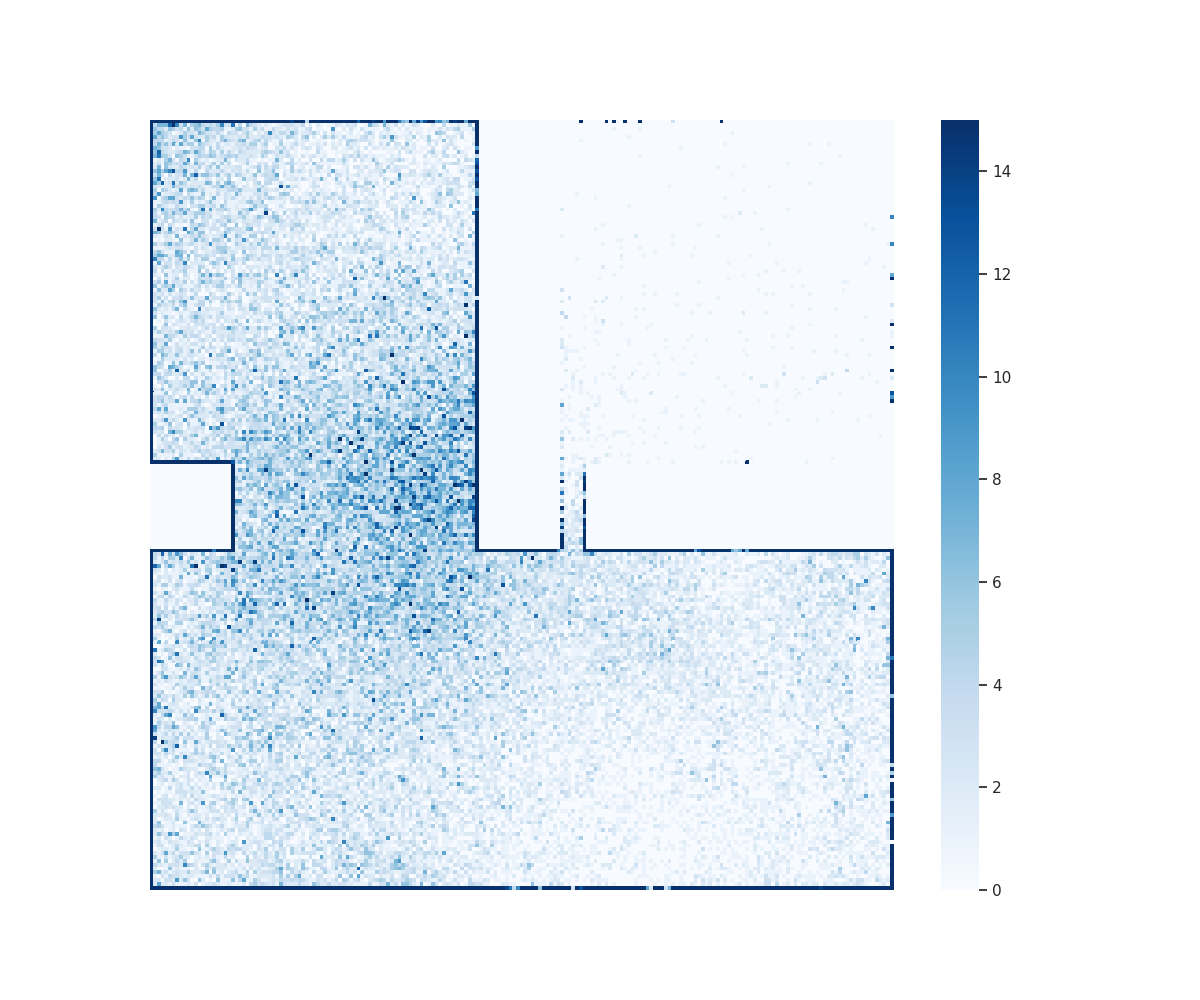}
\label{fig:o1}
}
\hspace{-7.5mm}
\subfigure[]{
\includegraphics[width=2.95cm]{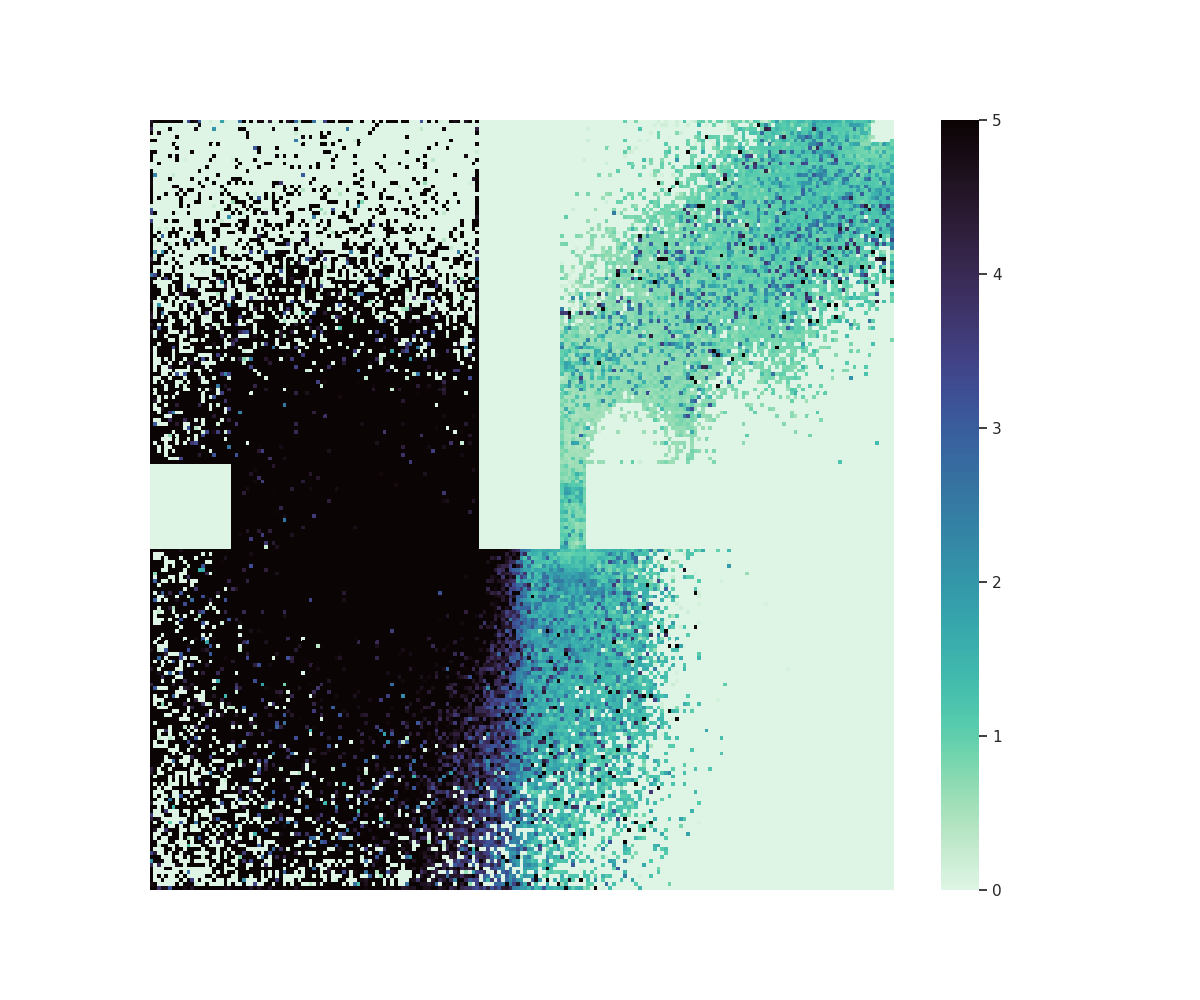}
\label{fig:mqes-aleatoric}
}
\hspace{-7.5mm}
\subfigure[]{
\includegraphics[width=2.95cm]{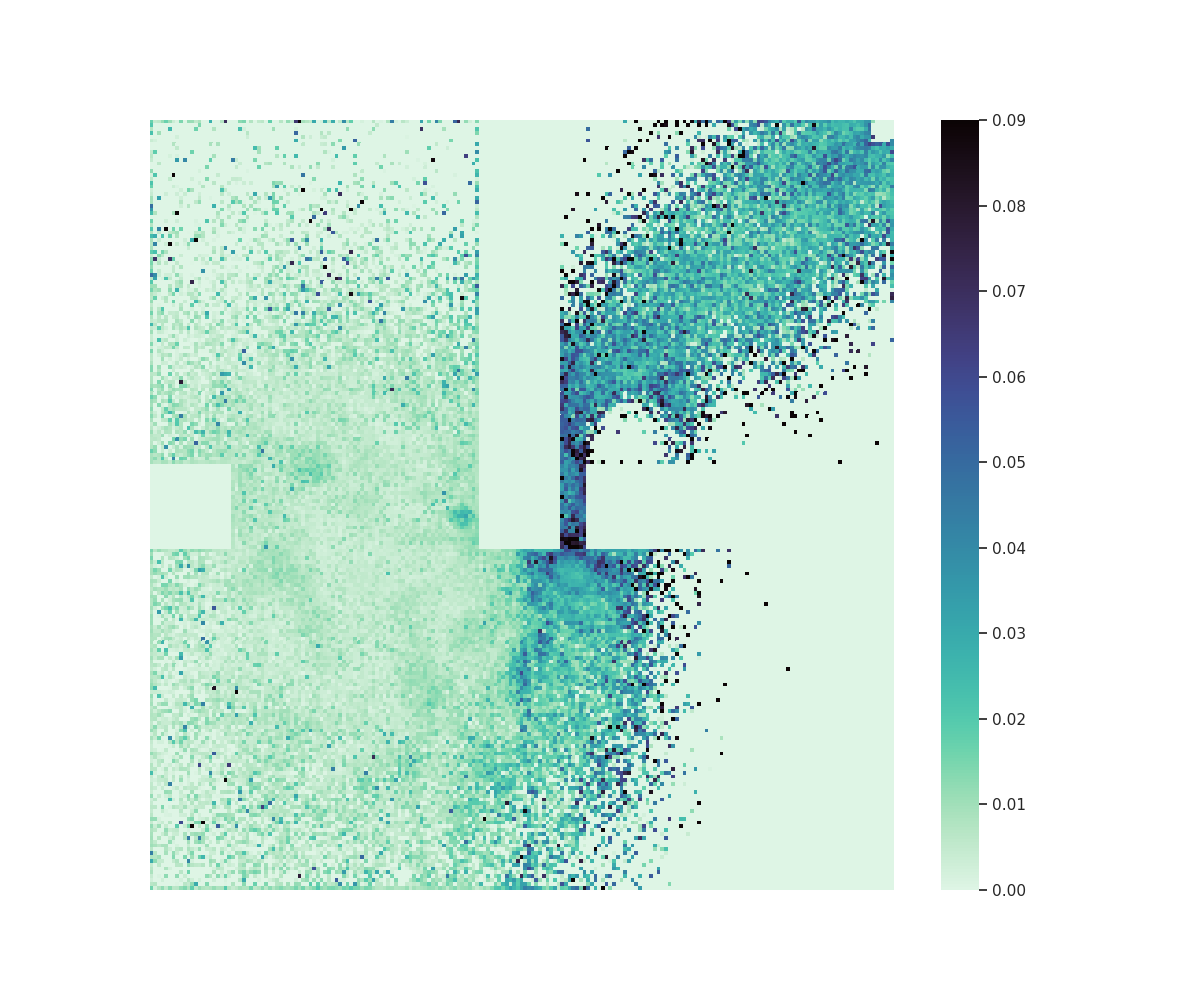}
\label{fig:mqes-ratio}
}
\hspace{-7.5mm}
\subfigure[]{
\includegraphics[width=2.95cm]{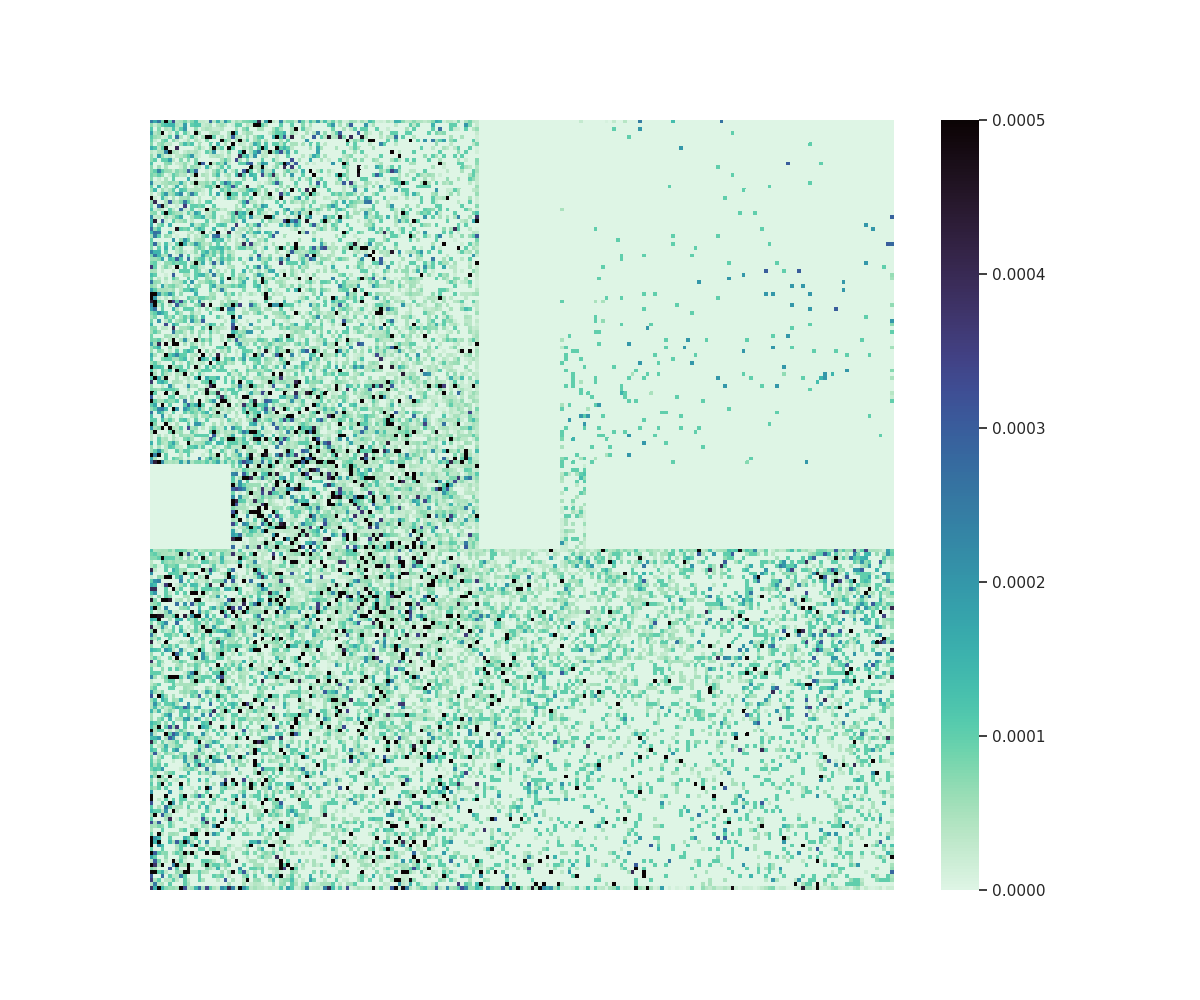}
\label{fig:doac-epistemic}
}
\label{fig:gridchaos-fres}
\caption{State visiting frequency heatmap from $1.0 \times 10^5$ to  $2.5 \times 10^5$ steps of one trial for (a) OVD-Explorer, (b) DSAC and (c) DOAC. (d) Estimated aleatoric uncertainty of OVD-Explorer; (e) Epistemic-aleatoric ratio of OVD-Explorer; (f) Estimated uncertainty for exploration in DOAC.
}
\end{center}
\end{figure}

\subsection{Exploration Patterns Visualization in the Case of GridChaos}
\label{app:analysis1}
In this section, we provide a visual analysis of the exploration patterns and advantages of OVD-Explorer compared to DSAC and DOAC in the GridChaos task.

To begin with, we present heatmaps showing the state visiting frequency during exploration for OVD-Explorer, DSAC, and DOAC in Figures~\ref{fig:m1},~\ref{fig:d1} and~\ref{fig:o1}, respectively.  From these heatmaps, we observe distinct exploration patterns. OVD-Explorer efficiently explores the right half of the environment, where the environmental risk is lower, while DSAC and DOAC are both trapped in the left half, characterized by higher risk. In contrast, Figure~\ref{fig:doac-epistemic} shows the estimated uncertainty in DOAC, which is larger on the left half. As DOAC encourages exploration in areas with relatively large estimated uncertainty, it explains why DOAC is stuck in the left half of the environment.

This visual analysis provides clear evidence of OVD-Explorer's ability to efficiently explore the environment by effectively balancing optimistic and noise-aware exploration based on accurate uncertainty estimations. The comparison with DSAC and DOAC further illustrates the significant advantage of OVD-Explorer in tackling exploration challenges in noisy environments.

\subsection{Analysis about Exploration Process of OVD-Explorer}
\label{app:analysis}
In this section, we conduct a statistical analysis to further verify that OVD-Explorer can effectively perform optimistic noise-aware exploration. We show the values of uncertainty estimations and our exploration objective (mutual information) at different stages during the training processes of two trials with different noise settings.

Figure~\ref{fig:heatmap1} illustrates the exploration process in a trial with lower environment noise around the goal, noting that the darker background color in the map represents higher noise, and the red dot represents the coordinate of the current state. The performance under this trail is given and the agent hardly ever reaches the goal before the 1000th epoch. Therefore, in the early stage, the aleatoric uncertainty is inaccurate and remains very low, as the value distribution shows little divergence. The figure also shows that our exploration objective (in green) is high when the epistemic uncertainty is high. So before the 1000th epoch, the exploration is guided by epistemic uncertainty, which follows the OFU principle.
As the agent explores further and begins to properly model the aleatoric uncertainty, i.e., the aleatoric uncertainty towards left is larger than right at current state (see epoch 1240). Then the mutual information value towards left is lower than right, although the epistemic towards left is higher. It indicates that OVD-Explorer can property guide noise-aware optimistic exploration. 

Figure~\ref{fig:heatmap2} shows the exploration process in a trial with higher environment noise around the goal. Similarly to the previous case, the early stages of exploration are guided by epistemic uncertainty. The agent would hardly estimate the accurate aleatoric uncertainty without obtaining any reward. Later, as the agent explores more and gains a better understanding of the environment, at the 1240th epoch, OVD-Explorer suggests exploring to the right, even though it has been recognized that the noisy on the right is high. This is because the epistemic uncertainty dominates under the mutual information at that time. In contrast, at the 1249th epoch, when the action towards right has been explored much, the significant higher aleatoric uncertainty towards right dominates. Therefore, following the mutual information, the action towards left is preferred. This demonstrates the trade off that OVD-Explorer make, performing noise-aware optimistic exploration.

\begin{figure*}[h]
    \centering
    \includegraphics[width=0.9\linewidth]{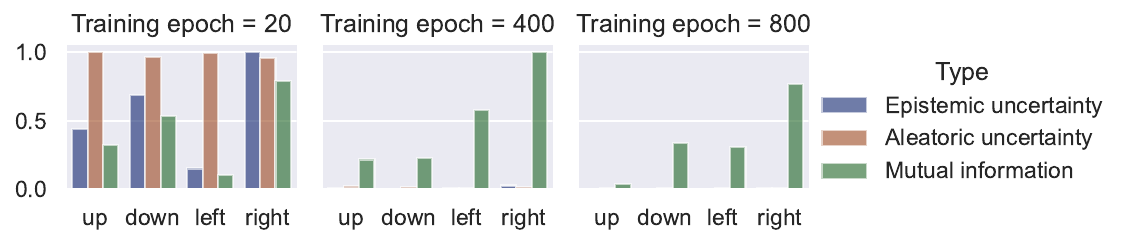}
\end{figure*}
\vskip -2.0cm
\begin{figure*}[h]
    \centering
    \includegraphics[width=0.9\linewidth]{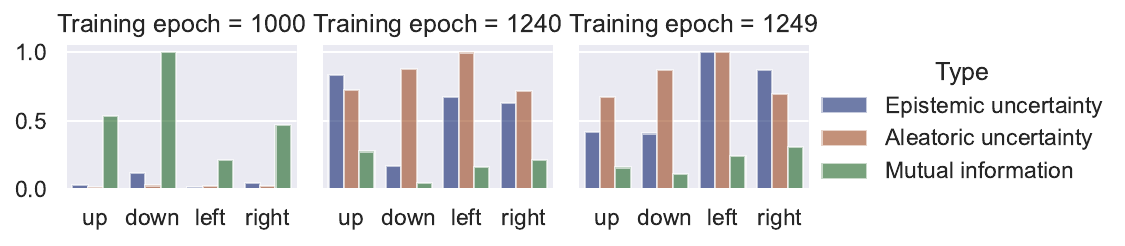}
\end{figure*}
\vskip -2.0cm
\begin{figure*}[h]
  \begin{minipage}[H]{0.72\linewidth}
    \centering
    \includegraphics[height=3.0cm]{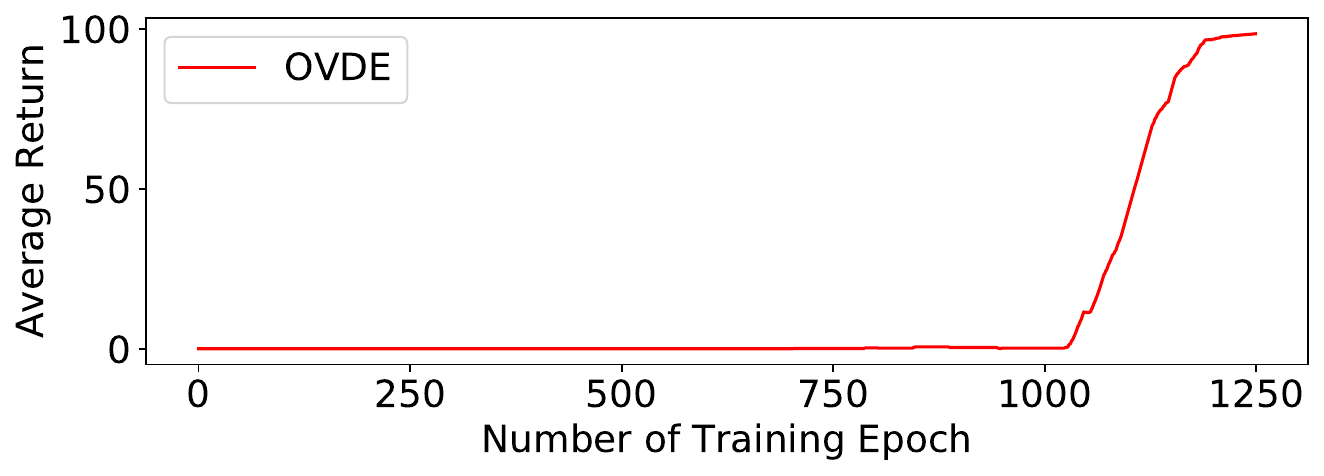}
  \end{minipage}%
  \begin{minipage}[H]{0.2\linewidth}
    \centering
    \includegraphics[height=2.75cm]{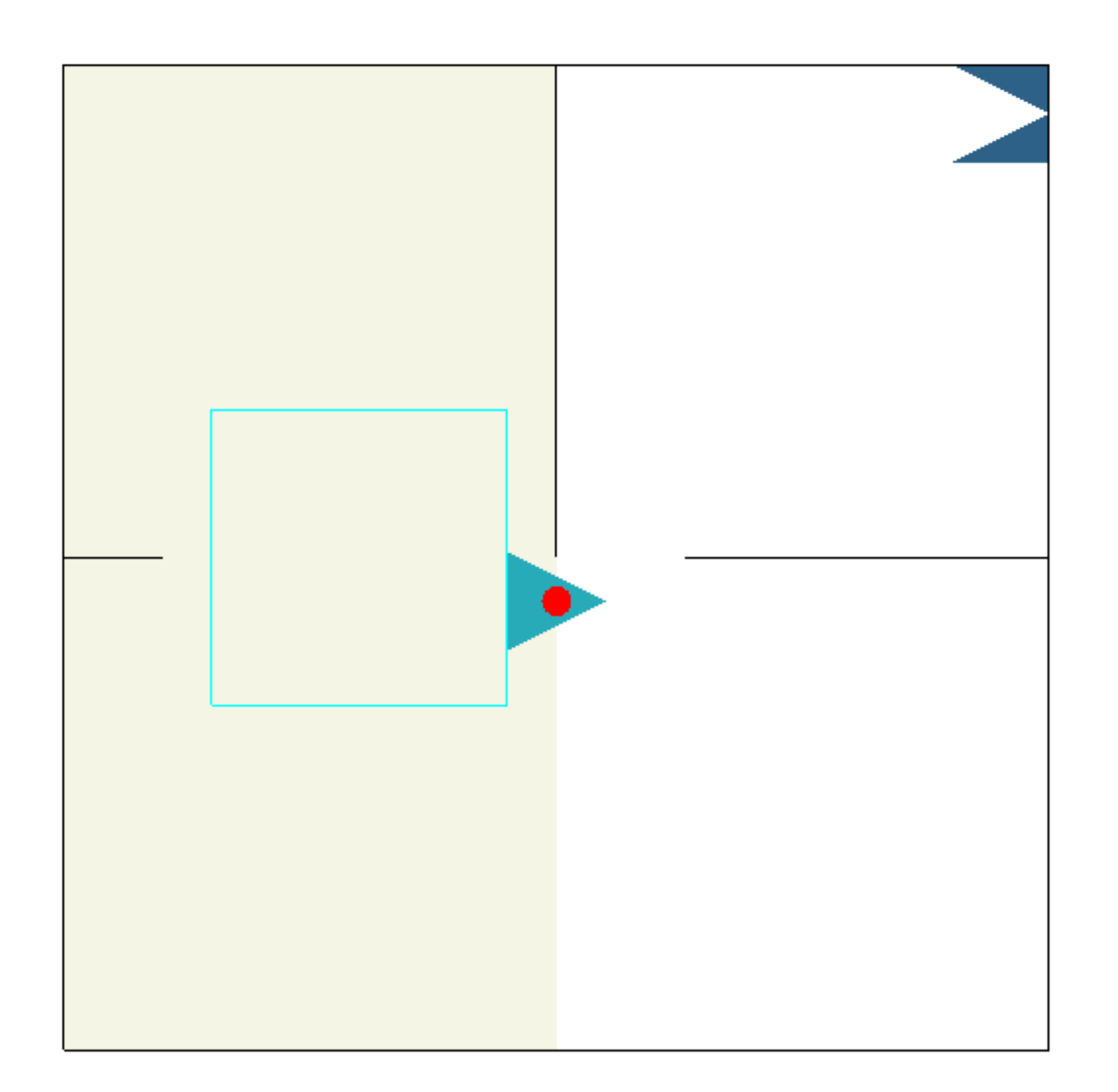}
  \end{minipage}
\vskip -0.1cm
\caption{The statistical analysis for the training process, with the aleatoric uncertainty around the goal is set lower.}
\label{fig:heatmap1}
\end{figure*}
\vskip -0.5cm
\begin{figure*}[h]
    \centering
    \includegraphics[width=0.9\linewidth]{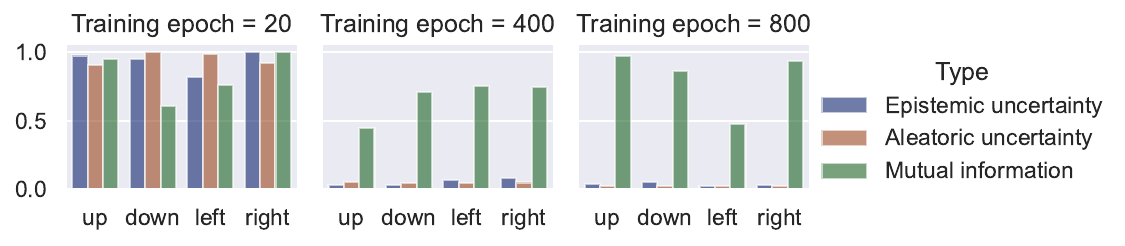}
\end{figure*}
\vskip -2.0cm
\begin{figure*}[h]
    \centering
    \includegraphics[width=0.9\linewidth]{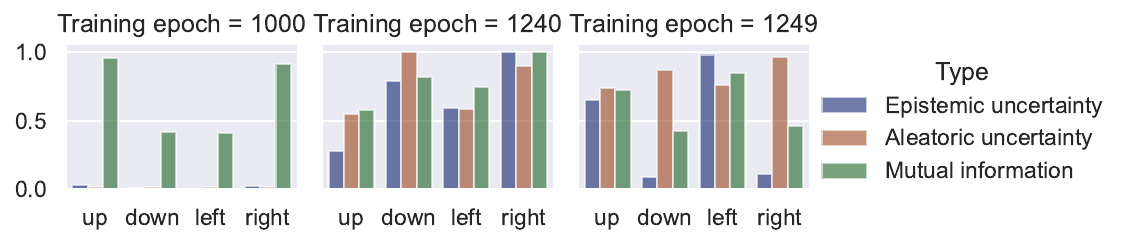}
\end{figure*}
\vskip -2.0cm
\begin{figure*}[h]
  \begin{minipage}[H]{0.72\linewidth}
    \centering
    \includegraphics[height=3.0cm]{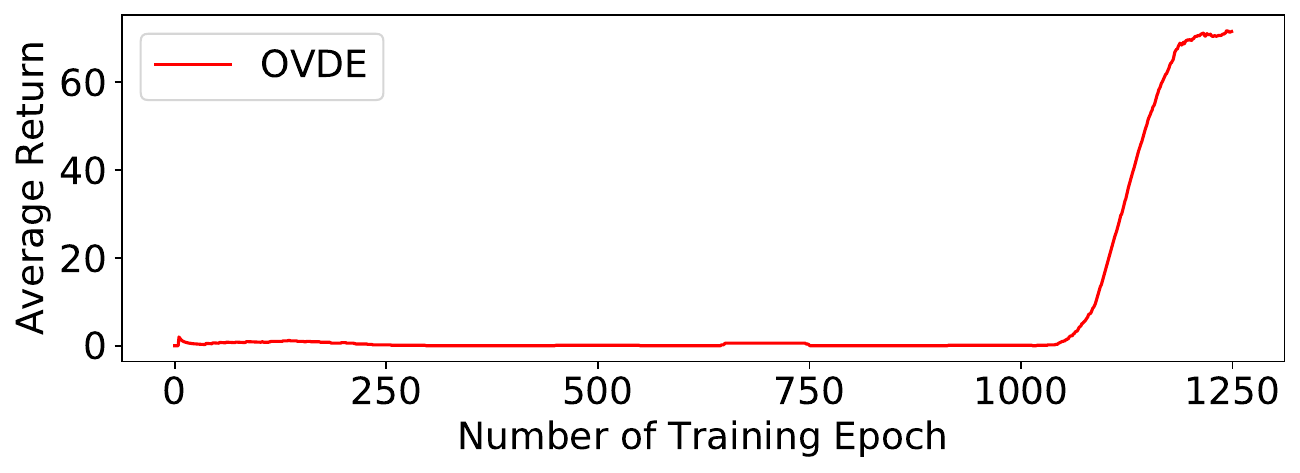}
  \end{minipage}%
  \begin{minipage}[H]{0.2\linewidth}
    \centering
    \includegraphics[height=2.75cm]{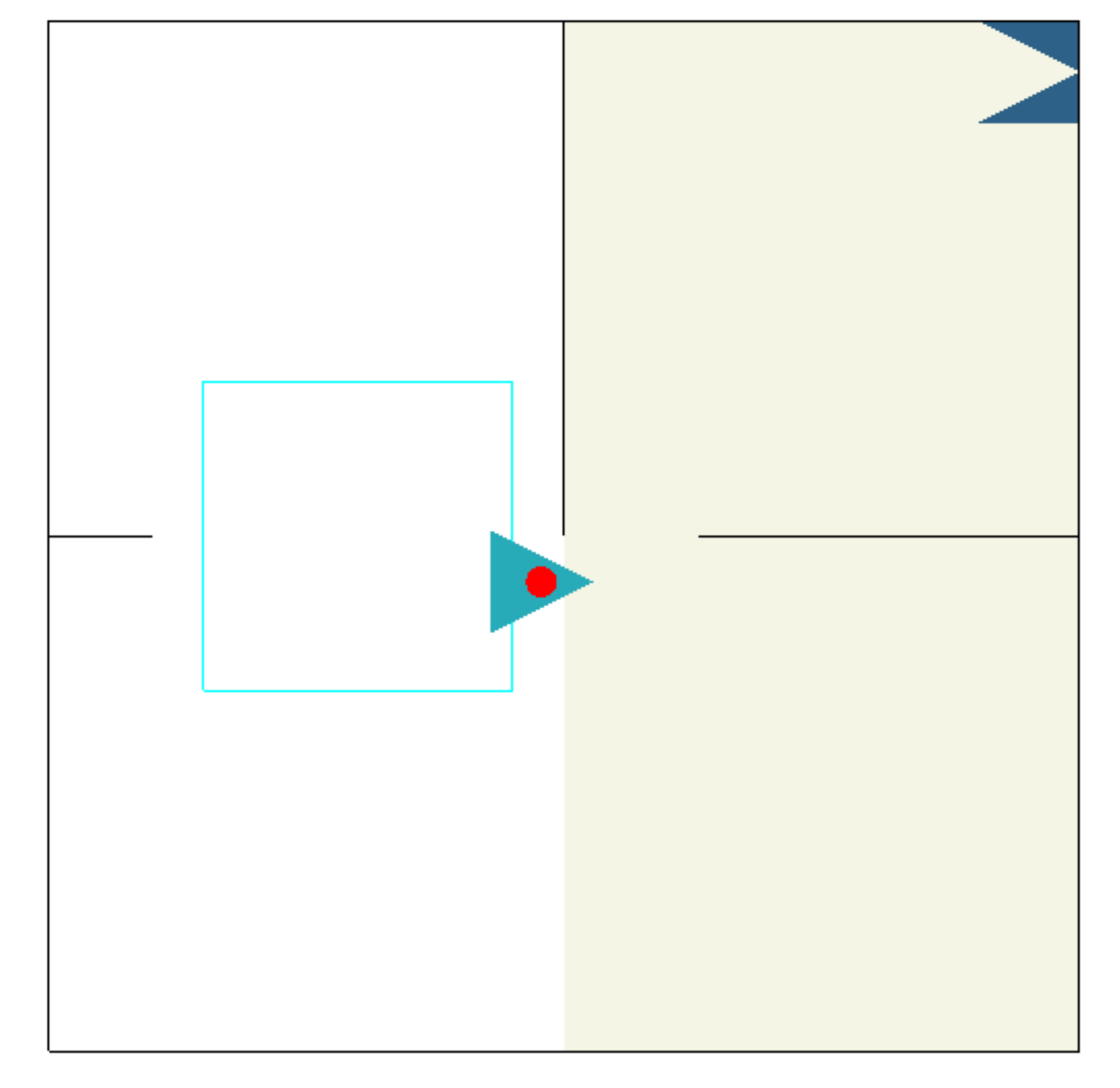}
  \end{minipage}
\vskip -0.1cm
\caption{The statistical analysis for the training process, with the aleatoric uncertainty around the goal is set higher.}
\label{fig:heatmap2}
\end{figure*}
\end{document}